\documentclass[11pt, a4paper]{article}
 
\usepackage[
  top=1.1in, bottom=1.1in,
  left=0.9in, right=0.9in
]{geometry}
 
\usepackage[T1]{fontenc}
\usepackage[utf8]{inputenc}
\usepackage{microtype}
 
\usepackage{amsmath, amssymb, amsthm}
 
\usepackage{xcolor}
\definecolor{accentcolor}{RGB}{0, 74, 135}   % deep academic blue — change to taste
 
\usepackage{graphicx}
\usepackage{booktabs}
\usepackage{caption}
\usepackage{subcaption}
\usepackage{float}
 
\usepackage[
  colorlinks = true,
  linkcolor  = accentcolor,
  citecolor  = accentcolor,
  urlcolor   = accentcolor
]{hyperref}
\usepackage{url}
 
\usepackage[
  backend      = biber,
  style        = numeric,
  sorting      = none
]{biblatex}
\usepackage{fancyhdr}
\usepackage{setspace}
\usepackage{titlesec}
\titleformat{\section}
  {\normalsize\scshape\centering\color{accentcolor}}{\Roman{section}.}{0.6em}{}
\titleformat{\subsection}
  {\normalsize\scshape\color{accentcolor}}{\thesubsection.}{0.5em}{}
\titleformat{\subsubsection}
  {\small\itshape}{\thesubsubsection.}{0.5em}{}
 
\usepackage{abstract}

\theoremstyle{definition}  % 'definition' style uses upright font
\newtheorem{theorem}{Theorem}[section]

\newtheorem{corollary}[theorem]{Corollary}
\theoremstyle{remark}

\newcommand{\calX}{\mathcal{X}}
\newcommand{\calS}{\mathcal{S}}

\newcommand{\calW}{\mathcal{W}}
\newcommand{\calP}{\mathcal{P}}
\newcommand{\eps}{\epsilon}
 
\usepackage{lipsum}
\usepackage{enumitem}
\setlist{topsep=4pt}
\setlist[itemize]{itemsep=4pt}
 
\usepackage{xspace}

\usepackage[capitalize,noabbrev]{cleveref}
\usepackage{algorithm}
\usepackage{algpseudocode}
\usepackage{comment}
 
\usepackage{titletoc}
\usepackage{tikz}
\theoremstyle{definition}
\newtheorem{claim}[theorem]{Claim}
\usepackage[textsize=tiny]{todonotes}
\newcommand{\arcmark}{\hbox{\textsc{ArcMark}}\xspace}
 
\title{%
  \Large\scshape\textcolor{accentcolor}{ArcMark}%
  \textsc{: Distortion-Free Multi-Byte LLM Watermark}\\
  \textsc{via Optimal Transport}\\[0.3em]
}
 
\author{%
  \begin{tabular}{ccc}
    Atefeh Gilani\textsuperscript{\textcolor{accentcolor}{1}} &
    Sajani Vithana\textsuperscript{\textcolor{accentcolor}{2}} &
    Carol Xuan Long\textsuperscript{\textcolor{accentcolor}{2}} \\[0.3em]
    Oliver Kosut\textsuperscript{\textcolor{accentcolor}{1,$\dagger$}} &
    Lalitha Sankar\textsuperscript{\textcolor{accentcolor}{1,$\dagger$}} &
    Flavio P.\ Calmon\textsuperscript{\textcolor{accentcolor}{2,$\dagger$}}
  \end{tabular}
}
 
\date{}
 
\begin{document}
 
\maketitle
\thispagestyle{empty}
% Force affiliations into footnote area without any marker
\makeatletter
\insert\footins{%
  \reset@font\footnotesize
  \interlinepenalty\interfootnotelinepenalty
  \splittopskip\footnotesep
  \splitmaxdepth\dp\strutbox
  \floatingpenalty\@MM
  \hsize\columnwidth
  \@parboxrestore
  \protected@edef\@currentlabel{}%
  \color@begingroup
  \noindent
  \textsuperscript{\textcolor{accentcolor}{1}}Arizona State University\\
  \textsuperscript{\textcolor{accentcolor}{2}}Harvard University\\
  \textsuperscript{\textcolor{accentcolor}{$\dagger$}}Equal senior author contribution\\
  Correspondence may be sent to Atefeh Gilani (\href{mailto:atefehhgilanii@gmail.com}{atefehhgilanii@gmail.com})
  \color@endgroup}
\makeatother

% ── Abstract ─────────────────────────────────────────────────
\begin{abstract}
Watermarking is an important tool for promoting the responsible use of large language models (LLMs). Existing watermarks insert a signal into generated tokens that either flags LLM-generated text (zero-bit watermarking) or encodes more complex messages (multi-bit watermarking). Though a number of recent approaches insert multiple bits into text without perturbing average next-token predictions, they largely extend design principles from the zero-bit setting, such as encoding a single bit per token. In contrast, a watermarker capable of embedding multiple \emph{bytes} into the text would dramatically increase the potential applications, by embedding information such as the ID of the user who submitted the prompt, the precise model version that was used, or even the prompt itself. We address this problem by introducing \arcmark: a new watermark construction based on coding and information-theoretic principles that is capable of reliably embedding multiple bytes of information into just a few hundred tokens, without any distortion of the underlying LLM next-token distribution. We derive \arcmark by formulating the distortion-free watermarking problem as a channel coding problem, and deriving an information-theoretic channel capacity that establishes the fundamental limit of embedding information in LLM output in a distortion-free manner. This capacity formulation informs the design of \arcmark. In practice, \arcmark outperforms competing multi-bit distortion-free watermarks in terms of reconstruction accuracy, including in the face of attacks that alter a subset of the LLM text. \arcmark output is also shown to be indistinguishable from unwatermarked text in terms of perplexity, and in downstream task quality.
\end{abstract}

% ============================================================
\section{Introduction}

The process of embedding several bits of information into tokens generated by a large language model (LLM) is commonly referred to as \emph{multi-bit watermarking}. Multi-bit watermarks can encode, for example, which model and user generated a given piece of text or code. Multi-bit watermarks can also help AI providers respond to emerging policy and regulatory efforts that call for marking their outputs as AI-generated and curbing LLM misuse \cite{ncsl_ai_legislation_2024,rijsbosch2025adoption,chandra2024reducing}.

Ideally, a multi-bit watermark should maximize \emph{rate}---the number of bits encoded per token---and minimize the \emph{error probability} of decoding the watermarked message. Recent constructions balance these two objectives while preserving text quality by enforcing a \emph{distortion-free} constraint: averaged over side information shared between a watermark encoder and decoder, watermarking does not change the LLM's average next-token predictions. Side information is usually generated by hashing previously-generated tokens and shared secret keys \cite{kirchenbauer2023watermark,dathathri2024scalable}. 

Multi-bit watermarking is a more complex counterpart of \emph{zero-bit watermarking}, which aims only to decide whether a text is LLM-generated or not \cite{kirchenbauer2023watermark,tsurheavywater,aaronson2023watermark,kuditipudi2023robust,dathathri2024scalable}. Despite being a fundamentally different problem (statistical detection vs.\ communication), existing multi-bit methods often extend design principles from the zero-bit setting. For example, constructions such as \cite{feng2025bimark} and \cite{mpac} encode information on a token-by-token basis, rather than treating message recovery over long sequences of tokens as a channel coding problem. As a result, current watermarking methods reliably embed only a few bits into sequences of hundreds of tokens. 

For multi-bit watermarks to be useful in practice, they must allow reliable recovery of several \emph{bytes} of information over strings of tokens. This raises two questions:
\begin{enumerate}[label=\textbf{Q\arabic*)}, align=right, leftmargin=*, labelsep=.2em]
    \item In theory, what is the largest amount of information we can reliably embed into LLM-generated text without distorting next-token predictions? 
    \item Can this limit be approached by a practical multi-\emph{\underline{byte}} watermarking scheme?
\end{enumerate}
We provide answers to these questions by deriving fundamental information-theoretic limits for watermarking and introducing  \arcmark: a watermarking method built on coding theory that achieves reliable multi-byte insertion in text (see Fig. \ref{fig:llama3_qwen-token300}). 
\begin{figure}[!tb]
    \centering
    \begin{minipage}{0.32\linewidth}
        \centering
        \includegraphics[width=\linewidth]{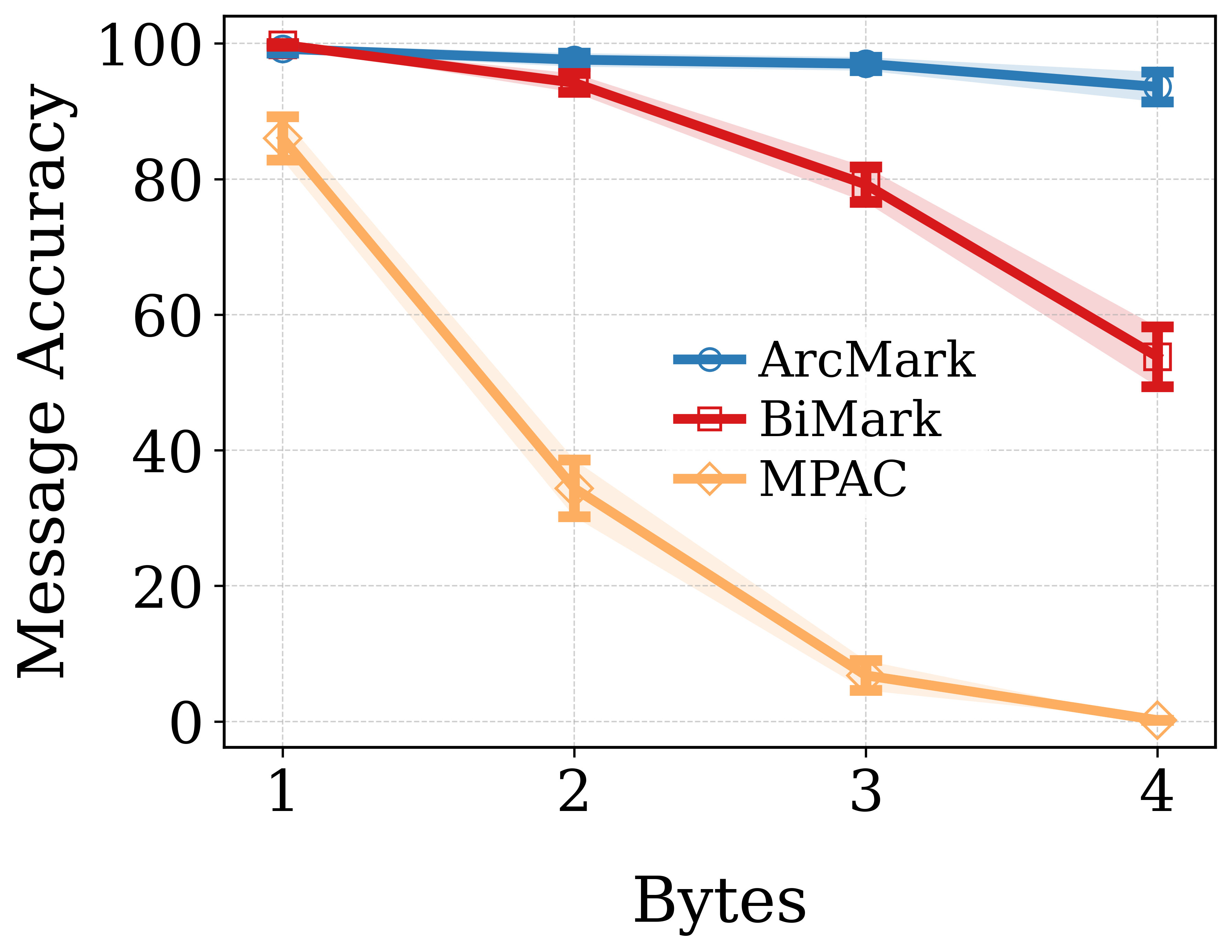}
    \end{minipage}
    \hfill
    \begin{minipage}{0.32\linewidth}
        \centering
        \includegraphics[width=\linewidth]{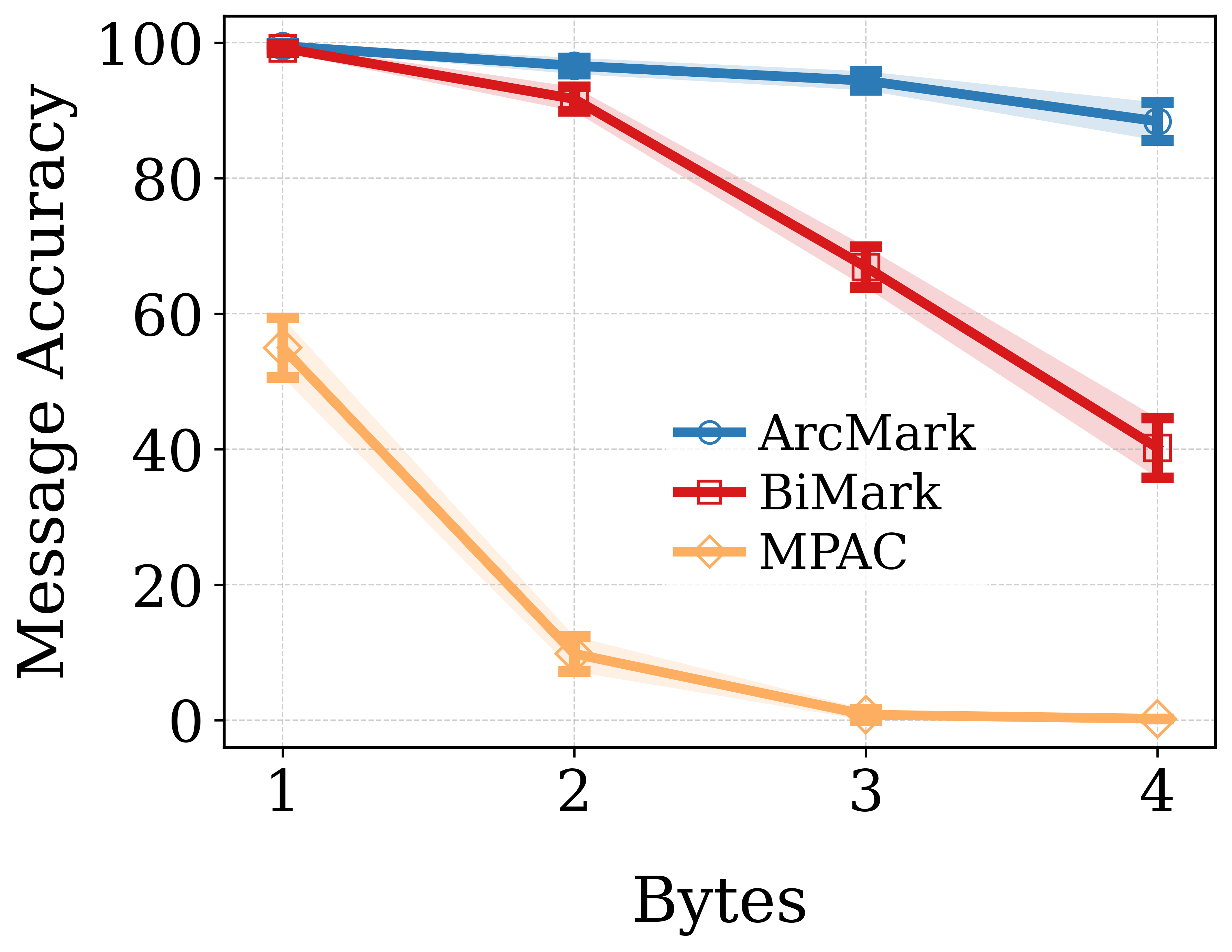}
    \end{minipage}
    \hfill
    \begin{minipage}{0.32\linewidth}
        \centering
        \includegraphics[width=\linewidth]{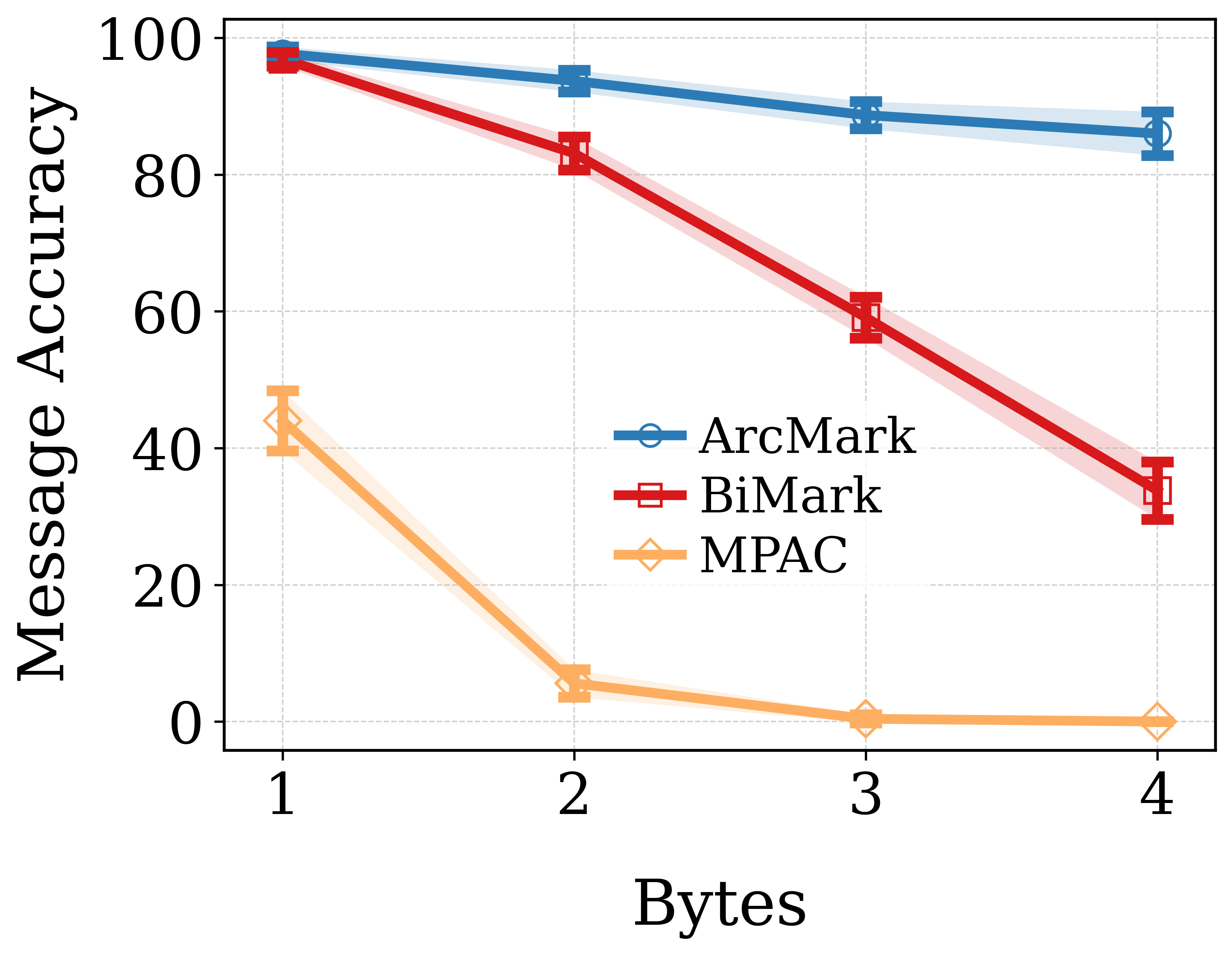}
    \end{minipage}
  \caption{Message accuracy at 300 tokens as a function of watermark payload size on Llama3-8B (left), Mistral-7B (middle), and Qwen3-8B (right), averaged over 1000 trials for payloads of 1, 2, and 3 bytes and 500 trials for 4 bytes. Error bars indicate standard error of the mean (SEM).}
    \label{fig:llama3_qwen-token300}
\end{figure}
We formally define and derive the (Shannon) capacity of multi-bit LLM watermarking, i.e., the largest number of watermarked bits that a token can carry. 

Our key insight is modeling multi-bit watermarking as communication over a noisy channel with side information available at both the encoder (the watermarker) and decoder. 
Under this model, the Shannon capacity of distortion-free multi-bit watermarking is defined as the maximum achievable rate (bits/token) while ensuring asymptotically vanishing decoding error probability over long sequences of tokens. 

Our information-theoretic analysis inspires \arcmark. \arcmark uses a random linear channel code to encode information about the message into each token, rather than assigning individual message bits to individual tokens. 
This linear code is defined on the integers with modulo operations, which can be naturally mapped to a circle, the ``arc'' in \arcmark. Specifically, \arcmark (i) 
represents message codeword symbols, tokens, and side information as points on the unit circle, and (ii) solves an optimal transport problem to map from codeword symbols to tokens while assuring the correct token distribution from the LLM.  

We evaluate \arcmark along five axes: message accuracy, perplexity, robustness to attacks, downstream utility, and zero-bit detection. Notably, we focus on \textbf{message accuracy}, the probability of correctly recovering the entire message, rather than \textbf{bit accuracy}, the average number of message bits that are recovered correctly. Message accuracy is more operationally relevant, as even a single incorrect bit could change the interpretation of the entire watermark. Across multiple LLMs, \arcmark attains significantly higher message accuracy than prior approaches, especially for longer message size (see Fig. \ref{fig:llama3_qwen-token300}), exhibits stronger robustness to attacks, and achieves downstream utility comparable to the unwatermarked baseline.

In summary, our main contributions include:
\begin{itemize}[leftmargin=*]
    \item We introduce \arcmark, a new distortion-free multi-bit watermark based on random linear channel coding and optimal transport.
    \item We formulate an information-theoretic model for multi-bit LLM watermarking, and derive an expression for the Shannon capacity of distortion-free watermarking.
    \item We prove that, under simplifying assumptions, \arcmark achieves the Shannon capacity.
    \item Our experiments demonstrate that \arcmark Pareto-dominates competing multi-bit watermarks in terms of rate (i.e., number of bits per token) and decoding error probability. The perplexity and downstream utility performance are nearly identical to the non-watermarked baseline, confirming the distortion-free characteristic. We also find that it is strongly robust against text substitution attacks, and achieves good zero-bit detection rates.
\end{itemize}

Our results reframe multi-bit watermarking as a problem in channel coding with side information---a formulation that dates back to Claude Shannon himself \cite{shannon1958channels}. This shift in perspective replaces heuristics with questions of capacity, reliability, and code design. Our findings also suggest that information and coding theory provide both a mathematical foundation for understanding the limits of LLM watermarks and a constructive path toward designing watermarks that embed bytes (not just bits!) into AI-generated text.

% ============================================================
\subsection{Related Work}
We review the zero-bit and multi-bit watermarking methods most closely related to \arcmark; a more comprehensive discussion is deferred to Appendix~\ref{app:related-work}.

\textbf{Zero-bit Watermarks.} Zero-bit watermarking schemes seek to detect whether a given piece of text is AI-generated. The task is formulated as a binary hypothesis test and has been studied through information-theoretic, statistical, and cryptographic lenses~\cite{kirchenbauer2023watermark,dathathri2024scalable,tsurheavywater,chen2000design,moulin2003information,martinian2005authentication,christ2024pseudorandomerrorcorrectingcodes,pmlr-v247-christ24a}. The first watermark for LLMs was proposed by \cite{kirchenbauer2023watermark}, commonly referred to as the Red-Green watermark. 
Among these, the closest to our work is \cite{tsurheavywater}, which employs an optimal-transport construction. We extend this by combining optimal transport with channel coding to embed multi-bit messages.

\textbf{Multi-bit Watermarks.}
Non-distortion-free multi-bit watermarks include \cite{qupaper}, \cite{xu2026xmarkreliablemultibitwatermarking}, and MPAC~\cite{mpac}. Distortion-free methods include \cite{cui2026mc2markdistortionfreemultibitwatermarking}, \cite{jiang2026mirrormarkdistortionfreemultibitwatermark}, \cite{stealthink}, and BiMark~\cite{feng2025bimark}, which embeds one bit per token. \cite{cui2026mc2markdistortionfreemultibitwatermarking}, \cite{jiang2026mirrormarkdistortionfreemultibitwatermark}, and \cite{stealthink} do not provide public implementations and we therefore cannot benchmark against them.

Achieving distortion-free watermarking can be considered the gold standard of the LLM watermarking problem, as it embeds messages without compromising text quality, making our problem setting strictly more challenging than that of non-distortion-free methods. We therefore benchmark  \arcmark primarily against BiMark~\cite{feng2025bimark}, and include MPAC~\cite{mpac} only as a reference point.

\section{Problem Statement}\label{problem}
\paragraph{Notation.} Random variables are denoted by uppercase letters (e.g., $X$ and $S$), with their realizations represented by lowercase letters (e.g., $x$ and $s$), and their distributions by subscripted symbols $P$ or $Q$ (e.g., $P_X$ and $P_S$). For a positive integer $m$, we define $[1\!:\!m] \triangleq \{1, \ldots, m\}$. The set $\{0,1\}^k$ denotes all binary strings of length $k$. The set of positive integers is denoted by $\mathbb{Z}^+$. For integers $a$ and $p$, $a \bmod p$ denotes the remainder of $a$ divided by $p$. We write $V \sim \text{uniform}(\mathcal{R})$ to denote that $V$ is uniformly distributed over $\mathcal{R}$. We use $\mathbf{1}_{A}$ for the indicator function on set $A$. We use the following notation for information-theoretic quantities: $H(X)$, $H(X|Y)$, $I(X;Y)$ and $I(X;Y|Z)$ for entropy, conditional entropy, mutual information, and conditional mutual information respectively.

We consider a large language model (LLM) with token vocabulary $\mathcal{X} = [1\!:\!N]$ that generates text autoregressively. At time $t$, a token $X_t\in\mathcal{X}$ is drawn according to the conditional distribution $P_{X_t|X_{1:t-1}}$, where $X_{1:t-1}$ denotes the tokens generated at previous time instances. For notational simplicity, we write $Q_{X_t}=P_{X_t|X_{1:t-1}}\in\Delta_{\mathcal{X}}$ where $\Delta_{\mathcal{X}}$ denotes the probability simplex over $\mathcal{X}$. 

%As depicted in Figure~\ref{fig:multi-water}, 
In multi-bit watermarking for LLM-generated text, there exists two parties: a watermarker (the LLM) and a decoder. At each generation step $t$, the two parties share side information $S_t\in\mathcal{S}$. The watermarker has access to the model’s next-token distribution $Q_{X_t}$, whereas the decoder observes only the generated tokens and the corresponding side information, i.e., $(X_t,S_t)$. The watermarker embeds a $k$-bit message $M\in\mathcal{M}=\{0,1\}^k$ into a sequence of $n$ generated tokens $\{X_t\}_{t=1}^n$ by sampling each $X_t$ from a watermarked distribution $Q_{X_t|S_t,M}$ that depends on the message to be embedded, the shared random key, and the original LLM token distribution $Q_{X_t}$. The watermark embedding process must preserve the quality of the generated text. We formalize this requirement via a distortion-free constraint, requiring the marginal token distribution to remain unchanged:
 \begin{align}\label{distortion-free}
     \mathbb{E}_{S_t}[Q_{X_t|S_t,M=m}]=Q_{X_t},\quad\forall m\in\mathcal{M}, \  t\in[1:n]
 \end{align}
This condition ensures that watermarking preserves the LLM's output distribution in expectation over the side information, regardless of which message is embedded.

 The decoder reconstructs the message from the observed sequence via a decoding function $g:\mathcal{X}^n\times\mathcal{S}^n\mapsto\mathcal{M}$ as $\hat{M}=g(X_{1:n},S_{1:n})$. We characterize the system by the following quantities: the error probability $P_{e}=\mathrm{Pr}(M\neq\hat{M})$, number of message bits $k$, and the token length $n$.

\section{\arcmark: Distortion-Free Multi-Byte Watermark}\label{sec:arcmark}

\begin{figure*}
    \centering
    
    \includegraphics[width=1\textwidth]{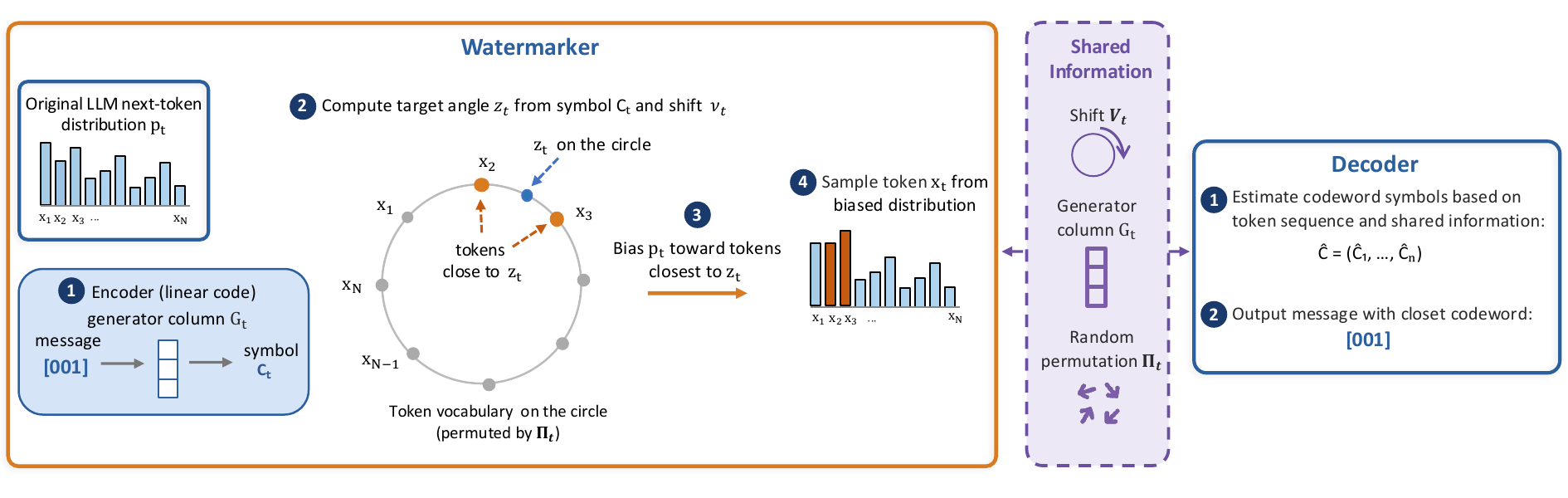}
    \caption{Overview of \arcmark}

    \label{method}
\end{figure*}
In this section, we present \arcmark, a distortion-free multi-bit watermark that embeds several bytes of information within a sequence of LLM-generated tokens while ensuring reliable decoding. In Section \ref{cap} we derive the capacity of the LLM watermarking channel, and prove that---under certain simplifying assumptions---\arcmark is capacity achieving. \arcmark consists of three stages:
\begin{enumerate}[leftmargin=*]
    \item \textbf{Message encoding:} At each time step $t$, the message to be embedded is mapped to a encoded symbol $z_t$, a point on the unit circle, via a random linear code. 
    \item \textbf{Message embedding:} At each time step $t$, the watermarker embeds the corresponding codeword symbol by sampling token $X_t$ from a watermarked distribution $Q_{X_t|Z_t}$, where $Z_t$ is derived from both the codeword symbol and the shared side information $S_t$.
    \item \textbf{Message decoding:} The detector observes the generated token-key sequence $\{(X_t, S_t)\}_{t=1}^n$ and decodes the message $\hat{M}$.
\end{enumerate}

We detail each of these stages as follows. The method is also illustrated in Fig.~\ref{method} (see also the algorithms in Appendix~\ref{app:alg}).

\paragraph{Message encoding:} 
The shared side information at time $t$ consists of three components: $S_t=(G_t, V_t,\Pi_t)$, where $G_t\sim\text{uniform}(\mathcal{F}^k)$ with $\mathcal{F}=[0:p-1]$ for some $p\in\mathbb{Z}^+$, $V_t\sim\text{uniform}(\mathcal{R})$ with $\mathcal{R}=[0:r-1]$ for some $r\in\mathbb{Z}^+$, and $\Pi_t$  is a random permutation of token indices in the LLM vocabulary. The vector $G_t$ represents a column of a generator matrix for a linear code used to map the message to a codeword symbol, $V_t$ is a random shift around the circle, and $\Pi_t$ is a permutation applied to the set of tokens. At time $t$, the watermarker maps the $k$-bit message $m\in\{0,1\}^k$ to a codeword symbol in $\mathcal{F}$ via the inner product
\begin{align}\label{linear_code}
    C_m(t)=m\cdot G_t\text{ mod }p.
\end{align}

We represent tokens $\mathcal{X}$, message codeword symbols $\mathcal{F}$, and random keys $\mathcal{R}$ as angles on the unit circle. Specifically, the $i$th token is mapped to angle $\frac{i}{N}2\pi$, the $i$th message symbol to $\frac{i}{p}2\pi$, and the $i$th random key to $\frac{i}{r}2\pi$.
\footnote{In our experiments, the side information is generated by hashing a window of previously generated tokens together with a shared secret key drawn from a pseudorandom generator. Full details are given in Sec.~\ref{sec:exp}.}

To embed message $m$ at time $t$, the watermarker combines the codeword symbol $C_m(t)\in\mathcal{F}$ with the shared key $V_t=v_t\in\mathcal{R}$ to produce the \emph{channel input}:
\begin{align}
    z_t=\left(\frac{2\pi C_m(t)}{p}+\frac{2\pi v_t}{r}+\phi\right)\bmod 2\pi \label{eq:zt-map}
\end{align}
where $\phi$ is a fixed angle offset. Next, we use optimal transport to select a token $X_t$ close to $z_t$.

\paragraph{Message embedding:}
At time $t$, the encoder transmits $z_t$ by biasing the LLM's token distribution such that, after observing $n$ token-key pairs $\{(x_t,s_t)\}_{t=1}^n$, the decoder can reliably recover the embedded message $m$. Our key insight is to design a watermarked distribution $Q_{X_t|Z_t}$ (channel) that is biased toward tokens that are closer to $z_t$ in angular distance, while maintaining the distortion-free constraint in \eqref{distortion-free}. In other words, the watermarker's objective is to sample a token $x_t$ that is as close as possible to $z_t$ in angular distance while satisfying \eqref{distortion-free}.

We define the angular distance between angles $\theta_1,\theta_2$ as:
\begin{align}\label{lee}
    d(\theta_1,\theta_2)=\min\{|\theta_1-\theta_2|,\ 2\pi-|\theta_1-\theta_2|\}
\end{align}
The watermarked distribution $Q^*_{X_t|Z_t}$ is obtained by solving the following optimization problem at each time $t$:
\begin{align}\label{ot}
    Q_{X_t|Z_t}^*&=\arg\min_{Q_{X_t|Z_t}}\mathbb{E}_{X_t,Z_t}\left[d\left(\frac{2\pi\cdot\Pi_t(X_t)}{N},Z_t\right)\right]
    \nonumber\\
    &\qquad\text{s.t. }\mathbb{E}_{Z_t}[Q_{X_t|Z_t}]=Q_{X_t}
\end{align}
where $\Pi_t$ is the random permutation of token indices shared between the watermarker and the detector. This is an optimal transport (OT) problem, which we solve efficiently using the Sinkhorn algorithm. The cost matrix $\texttt{C}\in\mathbb{R}^{N\times r}$ for the OT problem is defined as:
\begin{align}
    \texttt{C}_{i,j}=d\left(\frac{2\pi\cdot\Pi_t(i)}{N},\left(\frac{2\pi C_m(t)}{p}+\frac{2\pi j}{r}+\phi\right)\bmod 2\pi\right)
    \label{eq:cost_matrix}
\end{align}
for $(i,j)\in[0:N-1]\times[0:r-1]$ where $\Pi_t(i)$ denotes the permuted index of token $i$. The Sinkhorn algorithm returns the optimal joint distribution $Q^*_{X_t,Z_t}$, from which we extract the conditional:
\begin{align}
    Q^*_{X_t|Z_t=z_t}=\frac{Q^*_{X_t,Z_t}(X_t,z_t)}{P_{Z_t}(z_t)}=r\cdot Q^*_{X_t,Z_t}(X_t,z_t)
\end{align}
where the second equality uses $P(Z_t=z_t)=\frac{1}{r}$ since $V_t\sim\text{uniform}[0:r-1]$. Finally, the watermarker then samples token $X_t\sim Q^*_{X_t|Z_t=z_t}$ for the angle $z_t$ computed in the encoding stage \eqref{eq:zt-map}. This is the token received by the detector at token instance $t$.

\paragraph{Message decoding:} Upon receiving the token-key sequence $\{(x_t,s_t)\}_{t=1}^n$, the decoder estimates the embedded message $\hat{M}$ via minimum distance decoding over all possible codewords. 

Since the encoder transmits tokens as close as possible to the channel input $z_t$ (subject to the distortion-free constraint), the decoder first recovers the transmitted angles. At time $t$, using the received token $x_t$, shared key $v_t$, and permutation $\Pi_t$, the decoder estimates the encoder's channel input as:
\begin{align}
    \hat{z}_t = \frac{2\pi \cdot \Pi_t(x_t)}{N}
\end{align}
It then removes the shared randomness to recover the codeword symbol angle:
\begin{align}
    \hat{C}(t) = \left(\frac{2\pi \cdot \Pi_t(x_t)}{N} - \frac{2\pi v_t}{r}\right) \bmod 2\pi
\end{align}
which inverts the encoding operation in \eqref{eq:zt-map}.

For each candidate message $m\in\{0,1\}^k$, the decoder computes its angular codeword representation:
\begin{align}
    C_m^{\text{ang}} = \left[\frac{2\pi C_m(1)}{p}+\phi, \dotsc, \frac{2\pi C_m(n)}{p}+\phi\right]
\end{align}
and calculates the total distance to the received sequence:
\begin{align}
    D_m = \sum_{t=1}^n f\left(d\left(\hat{C}(t), C_m^{\text{ang}}(t)\right)\right) \label{eq:Dm_sum}
\end{align}
where $f$ is a non-decreasing function. The decoded message is then $\hat{M}=\arg\min_m D_m$.

We next ask whether \arcmark is optimal. That is, whether it embeds the maximum number of bits per token compatible with reliable decoding. To answer this, we first characterize the capacity of multi-bit watermarking in Sec.~\ref{cap}. In Theorem~\ref{cap_achieve}, we show that \arcmark achieves capacity (embeds the maximum number of bits per token) \textbf{under specific simplifying assumptions}.

\section{Capacity of Multi-bit Watermarking}\label{cap}
In this section, we derive the capacity of multi-bit watermarking. To define the capacity formally, we say a rate $R$, measured in bits/token, is \emph{achievable} if for every $\epsilon>0$, there exists a watermarker with $R=k/n$ and $P_e\le\epsilon$ under the distortion-free constraint. The \emph{capacity} $R_{\text{cap}}$ is the supremum over all achievable rates.

\textbf{Assumptions.} In order to complete our capacity formulation, we need to clarify our assumption on the next-token distribution $Q_{X_t}$. In practice, the watermarker knows the next-token distribution after the previous token has been generated, but it is computationally intractable to compute token distributions further in advance. For example, given $X_{1:t-1}$, running the LLM once gives the distribution $P_{X_t|X_{1:t-1}}$, but forming $P_{X_t,X_{t+1}|X_{1:t-1}}$ would require running the LLM many times, once for each possible value of $X_t$. Computing the distribution tokens further in the future would require exponentially many LLM computations. Thus, while in principle the watermarker has access to the complete joint distribution of a block of $n$ tokens, in practice it has very limited knowledge of token distributions beyond the next one. 

In order to capture this constraint in our model, we assume that the next-token distribution $Q_{X_t}$ is itself a random variable, which is not revealed to the watermarker until time $t$, and never revealed to the decoder. For further theoretical tractability, we assume that these distributions $Q_{X_t}$ are independent and identically distributed (i.i.d.) across time. We represent all i.i.d.\ $Q_{X_t}$ with $Q$, where $Q$ is a random variable in the simplex $\Delta_{\mathcal{X}}$ with the same distribution. We emphasize that this assumption that the token distributions are i.i.d.\ is necessary for the theoretical characterization of the capacity\footnote{The i.i.d. assumption allows us to simplify a $n$-time use of the LLM to a single-use, a methodology known as single-letterization in information theory.}, \textbf{and is not required for the performance of our practical scheme \arcmark}. Note that $Q$ is a random variable taking values in the simplex, and thus it is a randomly selected distribution on the token space. In particular, the fact that $Q_{X_t}$ are i.i.d. does \emph{not} mean that they are constant --- on the contrary, since each one is random, this assumption captures the fact that the next-token distributions do change from one token to the next.

The following theorem, proved in Appendix~\ref{proof_capacity}, characterizes the watermarking capacity. We interpret this theorem as follows. Let $X$ denote the output of the watermarked LLM and $W$ be an encoding of the message $M$ and the side information $S$. One can view this as a channel where the uncertainty is a result of the LLM's output distribution $Q_{X_t}$ at any time $t$. The capacity is the maximum information about $M$ that can be transmitted error-free and is captured by the maximal mutual information between $W$ and $X$ over all choices of the distribution of $W$ and a function $x(w,q)$ which determines the token based on the encoding of the message $w$ and the LLM distribution $q$. These choices are required to satisfy the distortion-free requirement that, given $q$, $X$ is distributed according to $q$.

\begin{theorem}\label{capacity}
    Assuming next-token distributions $Q_{X_t}$ are i.i.d., 
    the watermarking capacity is
    \begin{align}
        &R_{\textup{cap}}=\max_{P_W,x(w,q)}I(W;X),\nonumber\\ &\textup{s.t. }\Pr(X=x|Q=q)=q(x),
        \forall x\in\mathcal{X}, \forall q\in \Delta_\mathcal{X}\label{capacity_formula}
    \end{align}
    where $(W,Q) \sim P_W(w)P_Q(q)$, $X=x(W,Q)\in\mathcal{X}$, $P_W$ is a distribution on an arbitrary alphabet $\mathcal{W}$, and $Q$ is the random variable representing the token distribution from the simplex.
\end{theorem}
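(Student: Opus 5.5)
The capacity formula in Theorem~\ref{capacity} has exactly the form of the Shannon capacity for a channel whose state is known causally at the encoder, with the states i.i.d. Here the state is the next-token distribution $Q_t$. Shannon's strategy-letter construction \cite{shannon1958channels} fits this setting: $W$ plays the role of the Shannon strategy, and $x(w,q)$ is the strategy evaluated at the current state. The distortion-free constraint restricts the admissible strategies. I would therefore prove achievability and a converse separately, following the classical argument and carrying the constraint~\eqref{distortion-free} through both directions.

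\textbf{Achievability.} Fix $P_W$ and $x(\cdot,\cdot)$ satisfying $\Pr(X=x\mid Q=q)=q(x)$.
\begin{enumerate}[leftmargin=*]
\item Generate a random codebook $\{w^n(m)\}$ with entries i.i.d.\ $P_W$, drawn from the shared side information. Take $S_t=(w_t(m))_{m\in\mathcal{M}}$, so $S_t$ collects the $t$-th letters of all codewords.
\item At time $t$ the watermarker outputs $X_t=x(w_t(m),Q_t)$. If $x$ is a randomized map, fold its private randomness into $W$ by enlarging the alphabet $\mathcal{W}$.
\item Check the distortion-free constraint. For each fixed $m$, $W_t=w_t(m)$ is independent of $Q_t$ and has law $P_W$. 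Averaging over $S_t$ therefore gives $\Pr(X_t=x\mid Q_t=q)=q(x)$. Averaging further over $Q_t$ recovers the LLM law, which gives~\eqref{distortion-free}. (Conditionally on the realized $q$ it holds exactly.)
\item Decode by joint typicality between $w^n(m)$ and $X^n$. Since $Q_t$ is i.i.d.\ and independent of the codebook, the pairs $(W_t,X_t)$ are i.i.d.\ with law $P_{W,X}$. The standard random-coding bound then gives vanishing error whenever $R<I(W;X)$.
\item Derandomize by averaging: it is harmless here, because the codebook is itself the shared key.
\end{enumerate}

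\textbf{Converse.} Let $(S^n,M)$ be independent of the i.i.d.\ sequence $Q^n$. By Fano's inequality,
\[
nR\le I(M;X^n,S^n)+n\epsilon_n = I(M;X^n\mid S^n)+n\epsilon_n ,
\]
where the equality uses that $M$ is independent of $S^n$. Then
\[
I(M;X^n\mid S^n)\le \sum_{t=1}^n I(M,S^n,X^{t-1};X_t).
\]
Define $W_t=(M,S^n,X^{t-1},Q^{t-1})$. Because the encoder is causal in $Q$, $X_t$ is a (possibly randomized) function of $(W_t,Q_t)$. Since $Q_t$ is independent of $Q^{t-1}$, $M$ and $S^n$, the variable $W_t$ is independent of $Q_t$. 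Each term is therefore at most $I(W_t;X_t)$, and introducing a uniform time-sharing index $T$ single-letterizes the bound to $I(W;X)$ with $W=(W_T,T)$.

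The crux is showing that this $W$ satisfies the constraint $\Pr(X=x\mid Q=q)=q(x)$. The paper's constraint~\eqref{distortion-free} averages only over $S_t$ for fixed $m$. I would read it as holding for every realization of the past and of $q$, i.e.\ $\Pr(X_t=x\mid Q_t=q,M=m,X^{t-1},Q^{t-1})=q(x)$. This is the natural meaning, since $Q_{X_t}$ is itself the conditional law given the past. Under that reading, averaging over $W_t$ gives the single-letter constraint for each $t$, and hence for the mixture over $T$. Any remaining private randomization is absorbed into $W$.

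\textbf{Expected obstacles.} The main obstacle is the converse's constraint step: making precise which conditional distribution is preserved, and ensuring that conditioning on $X^{t-1}$ and $Q^{t-1}$ inside $W_t$ does not break the independence of $W_t$ and $Q_t$. A secondary issue is justifying the $\max$, namely that a finite $\mathcal{W}$ suffices via a Carathéodory/functional-representation argument, given that $Q$ may be continuous.
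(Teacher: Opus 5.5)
Your proposal is correct. The genuine difference from the paper lies in achievability.

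The paper uses a dither. It takes a channel input $U$ on $[0,2\pi]$, a uniform shared key $S$, and sets $W=f(U\oplus S)$, with $f$ pushing the uniform law to $P_W$. It then invokes the coding theorem for a state known at both ends to get rate $I(U;X\mid S)$, and shows $I(U;X\mid S)=I(W;X)$. In that scheme, distortion-freeness holds for every fixed codeword symbol $u$. So a fixed code works, and the per-token key is a single scalar whose alphabet does not grow with $n$. This is precisely the circular shift that \arcmark implements.

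You instead make the random codebook itself the shared key. Distortion-freeness then comes from averaging over the codebook. No derandomization is needed, because $P_e$ already averages over $S^n$. Your route is more elementary and self-contained: it is plain random coding over the memoryless channel $P_{X|W}(x|w)=\Pr(x(w,Q)=x)$. It needs no dither map, no continuous state, and no state-known-at-both-ends coding theorem. The price is a per-token key $S_t\in\mathcal{W}^{2^k}$ whose size explodes with $n$. That is allowed, since $\mathcal{S}$ is unrestricted, but it is far from the operational picture the paper's construction mirrors.

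Your converse is essentially the paper's: Fano, the chain rule, and an auxiliary built from the message, the side information and the past. The differences are bookkeeping:
\begin{itemize}
\item you condition on all of $S^n$ via $M\perp S^n$;
\item you let the encoder use $Q^{t-1}$, which the paper's encoder model omits;
\item you time-share, where the paper simply bounds each $I(W_i;X_i)$ by the maximum.
\end{itemize}

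The step you call the crux is milder than you fear. Since $W_t$ is independent of $Q_t$, we have $\Pr(x_t(W_t,q)=x)=\Pr(X_t=x\mid Q_t=q)$ regardless of what is packed into $W_t$. So any reading of \eqref{distortion-free} that implies this averaged statement suffices.

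The paper does not address your cardinality concern either: it writes $\max$ and assumes a discrete $P_W$ for achievability. Replacing joint typicality with an information-density threshold decoder covers non-discrete $\mathcal{W}$.
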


%Theorem~\ref{capacity} is proved in Appendix~\ref{proof_capacity}.

Next, we derive the capacity for a specific class of token distributions. Consider the class of distributions $\mathcal{P}_2(\mathcal{X})\subset \Delta_\mathcal{X}$, where any distribution $q\in \mathcal{P}_2(\mathcal{X})$ can be written as,
\begin{align}
        q(x)&=\begin{cases}
            1/2, & x=i,j\\
            0, & x\neq i,j
        \end{cases}
    \end{align}
for any pair of tokens $i,j\in\mathcal{X}$, $i\ne j$. Moreover, we assume $Q$ is uniformly distributed on $\mathcal{P}_2(\mathcal{X})$, i.e.,  $P(Q=q)=\frac{1}{\binom{N}{2}}$, $\forall q\in \mathcal{P}_2(\mathcal{X})$, where $N$ is the size of the token alphabet $\mathcal{X}$.

\begin{corollary}\label{dist_class}
   Assuming uniform token distributions over $\mathcal{P}_2(\mathcal{X})$, the capacity in Theorem~\ref{capacity} is 
    \begin{align}
        R_{\textup{cap}}=\log N+\sum_{t=1}^{N-1}\frac{t}{\binom{N}{2}}\log\frac{t}{\binom{N}{2}}.
    \end{align}
\end{corollary}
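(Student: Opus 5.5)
The plan is to evaluate the optimization in Theorem~\ref{capacity} directly. First I fix an upper bound on $I(W;X)=H(X)-H(X|W)$ that holds for every feasible pair $(P_W,x(w,q))$. Then I exhibit a feasible pair that attains it.

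\textbf{Upper bound on $H(X)$.} The constraint $\Pr(X=x|Q=q)=q(x)$ gives $P_X(x)=\mathbb{E}[Q(x)]$. Under the uniform law on $\mathcal{P}_2(\mathcal{X})$, each token lies in the support of exactly $N-1$ of the $\binom{N}{2}$ pairs, each time with mass $1/2$. Hence
\[
P_X(x)=\frac{(N-1)/2}{\binom{N}{2}}=\frac1N ,
\]
so $H(X)=\log N$ for every feasible scheme.

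\textbf{Lower bound on $H(X|W)$ via tournaments.} Fix $w$. Since $X=x(w,Q)$ must lie in the support of $Q$ (otherwise the constraint fails for that $q$), the map $q=\{i,j\}\mapsto x(w,q)$ chooses a ``winner'' of each pair. In other words, each $w$ defines a tournament $T_w$ on $N$ vertices. Because $Q$ is independent of $W$ and uniform, $P_{X|W=w}(x)=s_x(T_w)/\binom{N}{2}$, where $s_x$ is the score (number of wins) of $x$. The key combinatorial step is to show that the entropy of a normalized score vector is minimized by the transitive tournament, whose scores are $0,1,\dots,N-1$. I would argue this by majorization. By Landau's theorem, every score sequence sorted increasingly satisfies $\sum_{i\le k}s_{(i)}\ge\binom{k}{2}$ for each $k$, with equality at $k=N$. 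This says exactly that the transitive score vector $(0,1,\dots,N-1)$ majorizes every score vector. Entropy is Schur-concave, so
\[
H(X|W=w)\ge -\sum_{t=1}^{N-1}\frac{t}{\binom{N}{2}}\log\frac{t}{\binom{N}{2}}\quad\text{for all } w .
\]
Averaging over $w$ and combining with $H(X)=\log N$ gives the claimed value as an upper bound on $R_{\text{cap}}$. Verifying the Landau inequality is the only nontrivial step: the $\binom{k}{2}$ games among the $k$ lowest-scoring vertices are all won by some member of that set. Schur-concavity is standard.

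\textbf{Achievability.} Take $W$ uniform over the $N!$ permutations of $\mathcal{X}$, and let $x(w,q)$ be the element of the pair $q$ ranked higher under $w$. Each $T_w$ is then transitive, so $H(X|W)$ equals the lower bound. Feasibility holds because for any fixed pair $\{i,j\}$, exactly half of the permutations rank $i$ above $j$, so $\Pr(X=i|Q=q)=1/2=q(i)$. Thus $I(W;X)$ attains the bound, which proves Corollary~\ref{dist_class}. I expect the majorization step to be the main obstacle. Everything else is symmetry bookkeeping.
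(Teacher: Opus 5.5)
Your proposal is correct and follows the paper's proof essentially step for step. The constraint fixes $H(X)=\log N$, and each row's score vector is majorized by the transitive one $(N-1,\dots,1,0)$: the paper bounds the top-$t$ partial sums by $t(N-t)+\binom{t}{2}$, which is just the complement of your bottom-$k$ Landau inequality. Karamata's inequality (equivalently, your Schur-concavity argument) then gives the entropy bound. The only cosmetic difference is achievability: the paper uses a binary $W$ (the natural order and its reverse) instead of all $N!$ permutations, and both choices give transitive rows and split every pair evenly.
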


The proof of Corollary~\ref{dist_class} is given in Appendix~\ref{proof_dist_class}. In the limit as $N\to\infty$, the capacity for token distributions uniform on $\mathcal{P}_2(\mathcal{X})$ approaches $\approx0.2787\text{ bits/token}.$

The following theorem (proved in Appendix~\ref{proof_cap_ach}) shows that \arcmark achieves the capacity for the setting with binary token distributions as considered in Corollary~\ref{dist_class}.

\begin{theorem}\label{cap_achieve}
    Assuming token distributions are uniformly distributed on $\mathcal{P}_2(\mathcal{X})$, \arcmark achieves the capacity in Theorem~\ref{capacity} when:
    \begin{align}
        p=r=N,~
        \phi=\frac{\pi}{2N},~\text{and}~f(d)=-\log\left(1-\frac{d}{d_{\max}}\right)\text{ where }d_{\max}=\pi-\frac{\pi}{2N}.\label{cap_choice3}
    \end{align}
\end{theorem}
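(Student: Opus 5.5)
The plan is to reduce \arcmark, under the parameters \eqref{cap_choice3}, to an additive-noise channel over $\mathbb{Z}_N$ whose mutual information equals the expression in Corollary~\ref{dist_class}, and then to invoke a standard random-coding argument for linear codes with maximum-likelihood decoding. With $p=r=N$, the token angles $2\pi\Pi_t(i)/N$, the symbol angles $2\pi C/N$ and the key angles $2\pi v/N$ all lie on the same $N$-point grid. The offset $\phi=\pi/(2N)$ places each channel input $z_t$ strictly off that grid. As a result, every $z_t$ has a well-defined ranking of all $N$ grid points by angular distance: rank $1$ is nearest, rank $N$ is farthest, and no two grid points are equidistant from $z_t$.

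\textbf{Step 1 (solving the OT).} Fix $q\in\mathcal{P}_2(\mathcal{X})$ with support $\{i,j\}$. The marginal of $Z_t$ is uniform on $N$ points, and the token marginal puts mass $1/2$ on each of $\Pi_t(i)$ and $\Pi_t(j)$. I will show that an optimal coupling in \eqref{ot} sends each $z$ to whichever of the two tokens is closer to it. The key check is that this ``nearest'' assignment is feasible: the perpendicular bisector of the two token positions splits the offset grid of $z$'s into two halves of equal uniform mass. I would verify this by the reflection symmetry of the grid about that bisector, allowing fractional splitting when $N$ is odd. Feasibility together with pointwise minimality of the cost then gives optimality.

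I regard this step as the main obstacle. The bisector-counting symmetry argument needs care with the offset, and one must also confirm that Sinkhorn (or an exact OT solver) returns this coupling rather than another optimum. If ties cause trouble, I would first show that the nearest-assignment coupling is the unique optimum by an exchange argument.

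\textbf{Step 2 (noise law).} Define the effective noise $E_t=\Pi_t(X_t)-C_m(t)-V_t \bmod N$. Since $\Pi_t$ is a uniformly random permutation and $Q$ is uniform on pairs, the pair of permuted token positions is a uniformly random unordered pair of grid points. By Step 1, $X_t$ is the element of that pair with smaller rank relative to $z_t$. The grid point of rank $s$ is therefore selected exactly when the other element of the pair has larger rank, which happens with probability $(N-s)/\binom{N}{2}$.

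This law is independent of $C_m(t)$ and of $V_t$, so the per-token channel is $\hat C=C+E \bmod N$ with $E$ distributed as above. Re-indexing by $t=N-s$, the noise entropy is
\begin{align*}
H(E)=-\sum_{t=1}^{N-1}\frac{t}{\binom N2}\log\frac{t}{\binom N2}.
\end{align*}
With uniform input $C$, the mutual information is $I(C;\hat C)=\log N-H(E)$, which is exactly $R_{\textup{cap}}$ from Corollary~\ref{dist_class}. Since the uniformly random shift $V_t$ makes $Z_t$ uniform for every message, distortion-freeness follows directly from the OT constraint.

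\textbf{Step 3 (decoder is ML).} For the grid point of rank $s$, the angular distance to the reference angle is $d=(2s-1)\pi/(2N)$. This is affine in $s$, and the maximal value $d_{\max}=\pi-\pi/(2N)$ corresponds to $s=N$. Hence $1-d/d_{\max}$ is proportional to $N-s$, so it is proportional to $P_E$. With $f(d)=-\log(1-d/d_{\max})$, the score $D_m$ equals $-\log P(\hat C_{1:n}\mid C_m)$ up to an additive constant, so minimum-$D_m$ decoding is exactly maximum-likelihood decoding.

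\textbf{Step 4 (random linear coding).} For distinct $m,m'\in\{0,1\}^k$, the difference $m-m'$ has some entry equal to $\pm1$, which is a unit mod $N$. Therefore $(m-m')\cdot G_t$ is uniform on $\mathbb{Z}_N$ even when $N$ is composite. It follows that the codewords are uniform and pairwise independent across messages, and i.i.d.\ across time. The standard Gallager or union-bound random-coding argument, which needs only pairwise independence, then shows that the ML error probability averaged over $G$ vanishes for every $k/n<I(C;\hat C)$. Hence \arcmark achieves $R_{\textup{cap}}$.
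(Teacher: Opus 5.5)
Your route is the paper's. The exact OT solution is the nearest-token assignment, the rank-$s$ grid point is emitted with probability $(N-s)/\binom{N}{2}$, $f$ turns angular distance into negative log-likelihood so minimum-$D_m$ decoding is ML, and random linear coding then attains $\log N-H(E)=R_{\mathrm{cap}}$. Your additive-noise formulation is more careful than the paper's bare citation of a random-linear-coding theorem, and so is your remark that $(m-m')\cdot G_t$ is uniform because some entry of $m-m'$ is a unit mod $N$. One small correction in Step 4: $C_0\equiv 0$, so the codewords are not all uniform. What actually carries the step is that, for an additive channel, the pairwise error event for $(m,m')$ depends only on the noise $E^n$ and on $(m'-m)G$, which is exactly what you showed to be uniform.

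The genuine gap is a parity issue in Step 1, and ``fractional splitting when $N$ is odd'' hides it rather than fixing it.

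\emph{Even $N$.} The set of $z$ strictly closer to $A_i$ than to $A_j$ is an open semicircle. Its endpoints are multiples of $\pi/N$, so they never lie on the offset grid $\frac{2\pi}{N}(u+\frac14)$. When $N$ is even this semicircle contains exactly $N/2$ grid points, so the nearest assignment is feasible, and by the strict pointwise inequality it is the unique optimum. The symmetry that proves this is the rotation by $\pi$, which preserves the offset grid and swaps the two semicircles. Reflection in the bisector does not work: in units of $2\pi/N$ it sends $u+\frac14$ to $(a+b-u-1)+\frac34$, which is off the grid.

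\emph{Odd $N$.} The semicircle now contains $(N\pm1)/2$ grid points, so the nearest assignment violates the marginal constraint for \emph{every} pair. The OT must therefore send part of some $z$ to its farther token, and Step 2 fails at that $z$. There, $\binom{N}{2}P(X=\cdot\mid z)$ is the score vector of a fractional tournament with a split game. Such a vector cannot be a permutation of $(N-1,\dots,1,0)$ (peel off the top scorer inductively). By strict Schur-concavity, its entropy therefore exceeds the minimum of $H(X\mid W)$ from Claim~\ref{cl1}. Since $V$ is uniform, every input law hits such a $z$ with probability at least $1/N$. This gives $I(C;X\mid V,\Pi)\le\log N-H(X\mid C,V,\Pi)<R_{\mathrm{cap}}$, so the scheme provably falls short of capacity. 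For example, with $N=3$ and the symmetric split, $P(X\mid z)=(\tfrac{7}{12},\tfrac{4}{12},\tfrac{1}{12})$ over the three ranks. That is about $0.30$ bits per token, against $R_{\mathrm{cap}}=\tfrac23$.

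So your argument establishes the theorem only for even $N$, and you must state that hypothesis. The paper's proof asserts that the OT solution is deterministic without checking the marginal constraint, so it rests on the same tacit assumption.
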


Beyond the connection between the capacity and \arcmark established by Theorem~\ref{cap_achieve}, there are a number of design aspects of \arcmark that take inspiration from the capacity result in Theorem~\ref{capacity}. We delve deeper into these connections in Appendix~\ref{appendix_connections}.

\section{Experimental Results}\label{sec:exp}

For baselines, consider inference-time watermarking methods that are LLM-agnostic, support message-agnostic detection, and have publicly available implementations. Within this setting, we evaluate \arcmark against two state-of-the-art multi-bit watermarking methods, BiMark~\cite{feng2025bimark} and MPAC~\cite{mpac}, in terms of message accuracy. Achieving distortion-free watermarking is the gold standard of the LLM watermarking problem, as it embeds messages without compromising text quality, making our problem setting strictly more challenging than that of non-distortion-free methods. We therefore include MPAC, a non-distortion-free method, only as a reference point, while BiMark serves as our primary baseline for all remaining comparisons, including robustness under paraphrasing attacks, distribution preservation via perplexity, downstream text quality, and zero-bit detection.

\textbf{Setup.} Three open-weight models are used for generation with temperature $1.0$ and top-50 sampling: Llama3-8B~\cite{dubey2024llama3}, Qwen3-8B~\cite{yang2025qwen3}, and Mistral-7B~\cite{jiang2023mistral}. We use the C4-RealNewslike dataset~\cite{raffel2020exploring} as prompts for generation. For all experimental results presented in this section, we use the identity function for $f$ (see \cref{eq:Dm_sum}) and set $\phi = 0$ (see \cref{eq:zt-map}), which provides a simple instantiation and yields performance comparable to the choice in Equation~\eqref{cap_choice3} based on our empirical observations. Across all experiments, we set $p = |\mathcal{M}|$, where $|\mathcal{M}|$ denotes the total number of possible messages, so that the number of symbols matches the number of messages and each embedded symbol can represent an entire message, and for the number of discrete side information values, we use $r = 64$. All error bars represent the standard error of the mean (SEM). 

\textbf{Side information generation.} In all experiments, for each prompt, we draw a 31-bit secret key uniformly at random from a PCG64 PRNG seeded deterministically by the prompt index plus a fixed offset. At each token position, the side information is computed as the SHA-256 hash of the secret key concatenated with the Unicode-normalized text of the three preceding tokens. The decoder, given the same key and the watermarked text, recomputes the side information identically at every position. This construction follows standard practice in the watermarking literature \cite{kirchenbauer2023watermark,tsurheavywater}.

\subsection{Message Accuracy}\label{subsec:message-acc}
Message accuracy measures whether the full embedded message is recovered exactly, and is the operationally relevant metric for watermarking applications: a decoded message is either correct or it is not. In contrast, bit accuracy (measuring the fraction of message bits that were recovered correctly) can be misleading, since recovering even a single bit incorrectly leads to a completely different message being decoded, potentially causing misattribution of the text to the wrong source. Furthermore, if the per-bit error probability is $p$ and bit-errors are independent, the probability of correct message recovery is $(1-p)^k$, which decays exponentially in the payload size $k$, making message accuracy a significantly harder objective than bit accuracy for larger payloads. \arcmark is designed with this objective in mind, and its advantage over prior methods becomes more pronounced as payload size grows.

Figure~\ref{fig:qwen3-msg-acc-by-text-length} illustrates this on Qwen3-8B for payloads of 2, 3, and 4 bytes. \arcmark achieves comparable or superior message accuracy across all settings, with the gap over prior methods widening at larger payloads. Results on Mistral-7B, Llama3-8B, and the 1-byte setting are deferred to Appendix~\ref{app:message-acc}, where the same trend holds.

\begin{figure}[t]
    \centering

    \begin{subfigure}{0.32\linewidth}
        \centering
        \includegraphics[width=\linewidth]{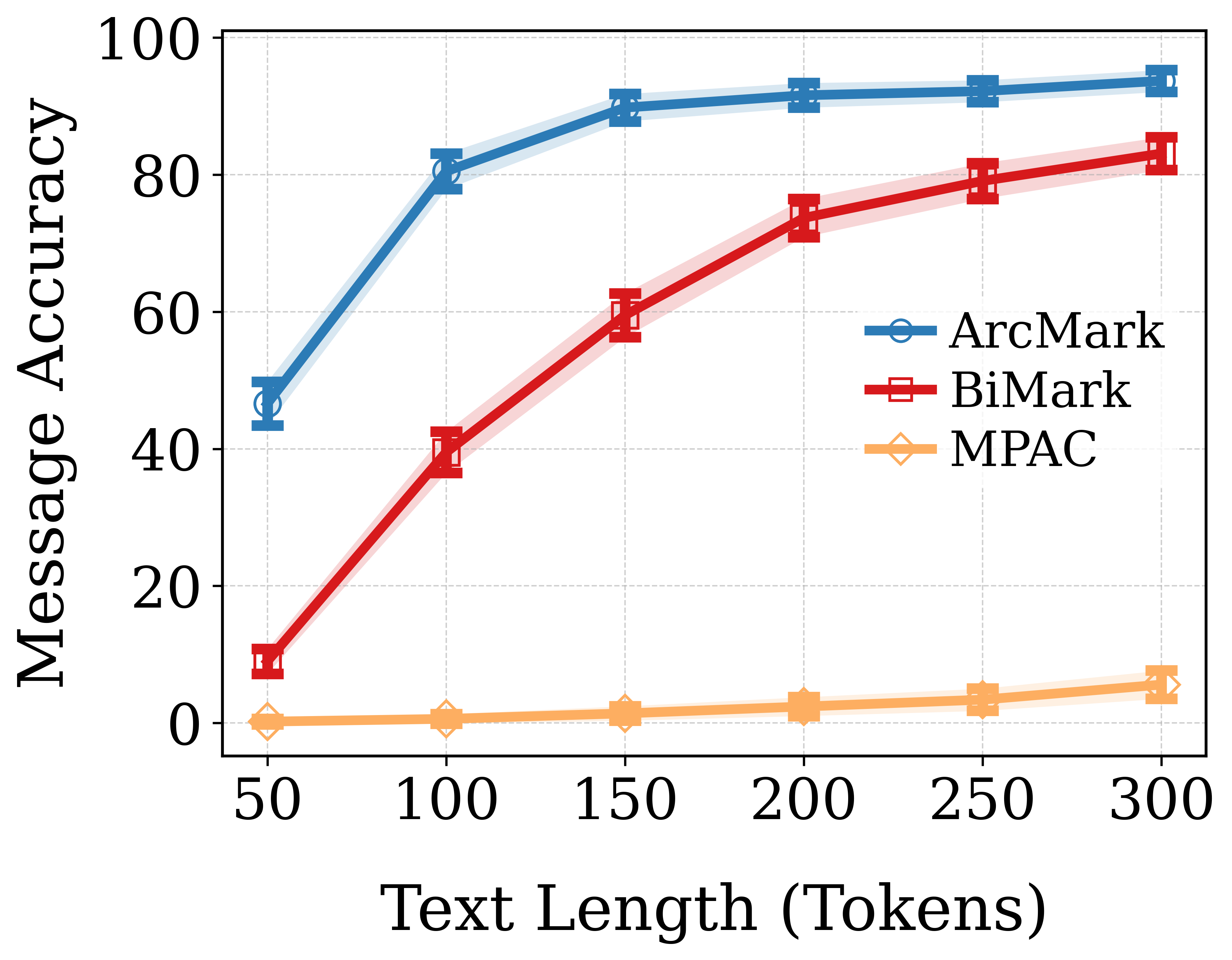}
        \label{fig:qwen3-16bit-acc}
    \end{subfigure}
    \hfill
    \begin{subfigure}{0.32\linewidth}
        \centering
        \includegraphics[width=\linewidth]{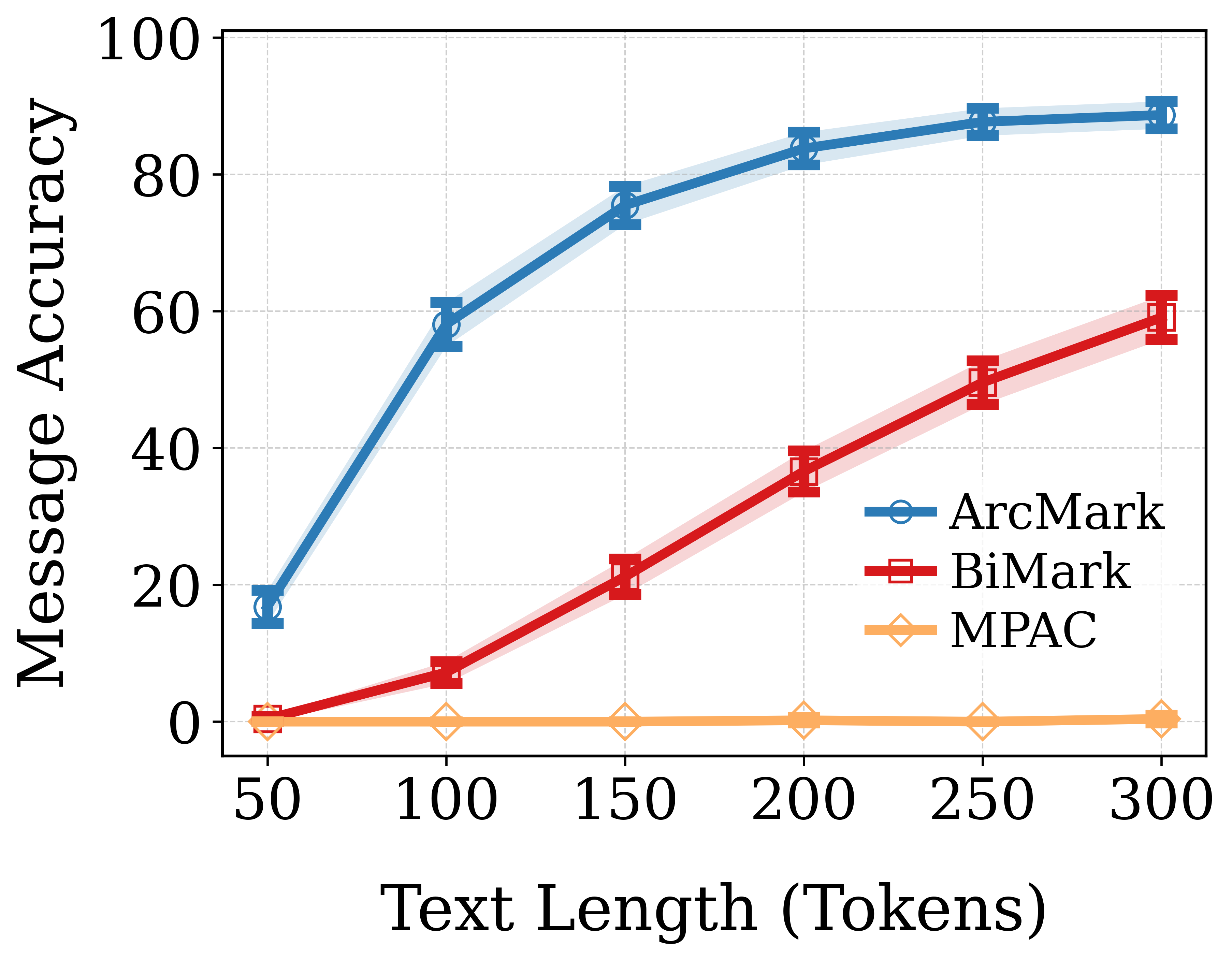}
        \label{fig:qwen3-24bit-acc}
    \end{subfigure}
    \hfill
    \begin{subfigure}{0.32\linewidth}
        \centering
        \includegraphics[width=\linewidth]{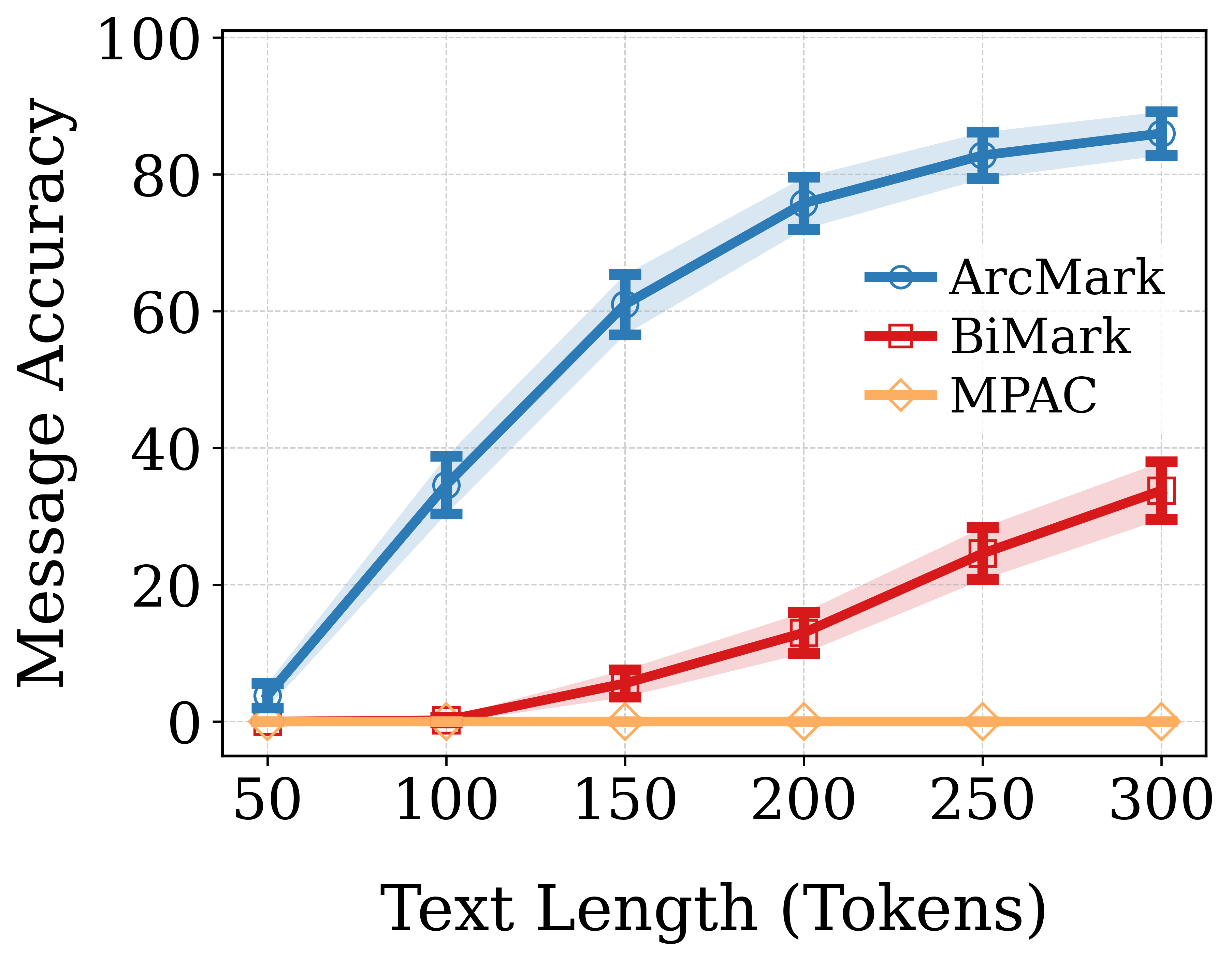}
        \label{fig:qwen3-32bit-acc}
    \end{subfigure}

    \caption{Message accuracy on Qwen3-8B for 2-byte (left), 3-byte (middle), and 4-byte (right) payloads, averaged over 1000 trials for 2 and 3 bytes and 500 trials for 4 bytes. Since recovering the full message implies that every bit has to be correct, the task becomes increasingly difficult as payload size grows. \arcmark is designed to directly optimize message recovery, which explains the growing performance gap over baselines.}
    \label{fig:qwen3-msg-acc-by-text-length}
\end{figure}

\begin{figure}
    \centering
    \includegraphics[width=\linewidth]{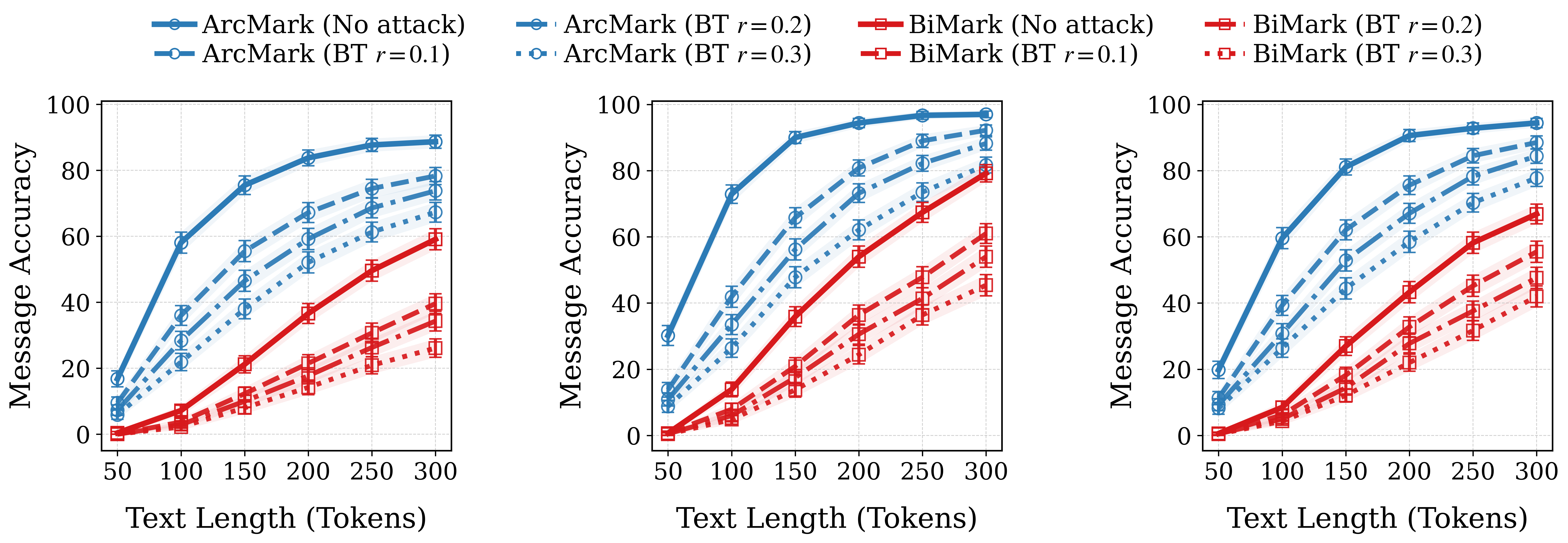}
\caption{Message accuracy under French back-translation (BT) paraphrasing attack at ratios $r \in \{0.1, 0.2, 0.3\}$ for a 3-byte payload, on Qwen3-8B (left), Llama3-8B (middle), and Mistral-7B (right), averaged over 1000 trials. Curves show the no-attack and attacked variants of \arcmark and BiMark. }
   \label{fig:french-bt-message-accuracy-24bit}
\end{figure}
% =========================
%%%%%%%%%%%%%%%%%%%%%%%%%%%%%%%%%%%%%%%%%%
\subsection{Robustness to Paraphrasing Attacks}\label{subsec:attack} 
We evaluate \arcmark's robustness against a back-translation (BT) paraphrasing attack, where each sentence is independently passed through an English $\to$ French $\to$ English translation pipeline using the Helsinki-NLP OPUS-MT models~\cite{tiedemann-thottingal-2020-opus}. The attack ratio $r$ denotes the probability with which each sentence is independently subjected to back-translation.
 
We evaluate at ratios $r \in \{0.1, 0.2, 0.3\}$ for a 3-byte payload on Llama3-8B, Qwen3-8B, and Mistral-7B. As baselines, we include the no-attack results for both \arcmark and BiMark. Figure~\ref{fig:french-bt-message-accuracy-24bit} shows that \arcmark maintains substantially higher message accuracy than BiMark under all attack ratios. Notably, \arcmark at $r=0.3$ still outperforms the unattacked BiMark on all three models. Results for a 2-byte payload are deferred to Appendix~\ref{app:attack}, where \arcmark similarly outperforms BiMark across all attack ratios.

%%%%%%%%%%%%%%%%%%%%%%%%%%%%%%%%%%
\subsection{Perplexity}\label{subsec:ppl} 
We use perplexity as a proxy for generation quality and to empirically assess whether \arcmark preserves the model's output distribution, which is guaranteed by construction (see equation~\eqref{ot}). Perplexity is computed on watermarked and non-watermarked text. \arcmark consistently achieves perplexity closer to the non-watermarked baseline than BiMark across all payload sizes and token lengths on both Qwen3-8B (Table~\ref{tab:qwen-ppl-no-attack}) and Llama3-8B (Table~\ref{tab:llama-ppl-no-attack} in Appendix~\ref{app:ppl}), providing empirical evidence for its distribution-preserving property.
\begin{table}[h]
\centering
\small
\setlength{\tabcolsep}{3.2pt}
\renewcommand{\arraystretch}{1.08}
\caption{Perplexity on Qwen3-8B for 2-, 3-, and 4-byte watermarks, averaged over 1000 trials for 2 and 3 bytes and 500 trials for 4 bytes. Bold indicates the method closer to the non-watermarked baseline.}
\label{tab:qwen-ppl-no-attack}
\resizebox{\linewidth}{!}{%
\begin{tabular}{@{}c c cc cc cc@{}}
\toprule
& & \multicolumn{2}{c}{2-byte} & \multicolumn{2}{c}{3-byte} & \multicolumn{2}{c}{4-byte} \\
\cmidrule(lr){3-4}
\cmidrule(lr){5-6}
\cmidrule(l){7-8}
Tokens & No watermark & ArcMark & BiMark & ArcMark & BiMark & ArcMark & BiMark \\
\midrule
50  & 4.661 $\pm$ 0.058 & \textbf{4.758 $\pm$ 0.058} & 4.967 $\pm$ 0.066 & \textbf{4.738 $\pm$ 0.057} & 4.976 $\pm$ 0.068 & \textbf{4.696 $\pm$ 0.080} & 4.965 $\pm$ 0.094 \\
100 & 4.390 $\pm$ 0.046 & \textbf{4.453 $\pm$ 0.047} & 4.624 $\pm$ 0.051 & \textbf{4.437 $\pm$ 0.048} & 4.630 $\pm$ 0.053 & \textbf{4.438 $\pm$ 0.067} & 4.618 $\pm$ 0.074 \\
150 & 4.241 $\pm$ 0.042 & \textbf{4.299 $\pm$ 0.043} & 4.463 $\pm$ 0.046 & \textbf{4.225 $\pm$ 0.044} & 4.435 $\pm$ 0.048 & \textbf{4.271 $\pm$ 0.062} & 4.493 $\pm$ 0.067 \\
200 & 4.142 $\pm$ 0.040 & \textbf{4.144 $\pm$ 0.041} & 4.345 $\pm$ 0.043 & \textbf{4.073 $\pm$ 0.042} & 4.308 $\pm$ 0.045 & \textbf{4.147 $\pm$ 0.060} & 4.393 $\pm$ 0.064 \\
250 & 4.041 $\pm$ 0.038 & \textbf{4.005 $\pm$ 0.040} & 4.241 $\pm$ 0.042 & \textbf{3.947 $\pm$ 0.040} & 4.216 $\pm$ 0.044 & \textbf{4.015 $\pm$ 0.058} & 4.299 $\pm$ 0.062 \\
300 & 3.941 $\pm$ 0.037 & \textbf{3.878 $\pm$ 0.039} & 4.148 $\pm$ 0.042 & \textbf{3.821 $\pm$ 0.039} & 4.116 $\pm$ 0.043 & \textbf{3.887 $\pm$ 0.056} & 4.189 $\pm$ 0.059 \\
\bottomrule
\end{tabular}%
}
\end{table}
%%%%%%%%%%%%%%%%%%%%%%%%%%%%%%%%%%%
\subsection{Quality on Downstream Tasks}\label{subsec:downstream} 
We evaluate the downstream utility of ArcMark under three downstream tasks: extractive question answering on SQuAD v2 \cite{qanda}, long-context Python code completion on LCC \cite{lcc}, and functional code correctness on HumanEval \cite{humaneval}. For each task, we compare ArcMark against an unwatermarked baseline and against BiMark on Llama3-8B and Qwen3-8B with 2-byte messages embedded in the generated text. Across all three tasks and both models, ArcMark closely matches the unwatermarked baseline, indicating that 2-byte watermarking imposes no meaningful degradation on downstream usability. Complete experimental details and results are reported in Appendix~\ref{app:downstream}.

%%%%%%%%%%%%%%%%%%%%%
\subsection{Zero-Bit Detection}\label{sec:zero-bit} 
Beyond message recovery, a watermarking system should also support zero-bit detection: given an arbitrary text, determine whether it was watermarked at all. The \arcmark decoder handles this naturally by thresholding its best-match score — the minimum distance between the observed token sequence and any valid codeword. A text is declared watermarked if this distance is sufficiently small, indicating that some codeword closely matches the observed sequence, and declared unwatermarked otherwise. This requires no modification to the encoder or watermarking process. We evaluate zero-bit detection on Llama3-8B, Qwen3-8B, and Mistral-7B, using watermarked texts as positives and human-written C4 articles as negatives, reporting true positive rate at a fixed false positive rate of 1\%. Full results are in Appendix~\ref{app:falsealarm}, where BiMark achieves modestly higher TPR at shorter text lengths, but both \arcmark and BiMark reach near-perfect detection accuracy from 150 tokens onward.
%%%%%%
\subsection{Ablation Study on Side Information Resolution}\label{subsec:ablation}

The parameter $r$ controls the number of discrete side information values and determines the size of the OT cost matrix (see Eq.~\ref{eq:cost_matrix}), directly affecting the computational cost of the Sinkhorn solver. We find that message accuracy is similar across $r \in \{64, 256, 512\}$, suggesting that a small value such as $r = 64$ is sufficient in practice. Full results are in Appendix~\ref{app:ablation}.

%%%%%%%%%%%%%%%%%%%%%%%%%%%%%%%%%%%%%
%\section{Broader Impact and Limitations}

\section{Conclusion}

Our results suggest a principled path forward for multi-bit watermark design: once watermarking is cast as channel coding with side information, novel schemes inspired by time-tested error-correcting codes can be %brought to bear
used to watermark LLMs. This approach led to \arcmark, which significantly outperforms existing benchmarks. Future work includes developing watermarking schemes that achieve capacity in a wider class of settings. Evaluating the capacity characterization in Theorem~\ref{capacity} for more realistic settings than in Corollary~\ref{dist_class} is a challenge; but doing so can yield insights leading to better watermarkers. Finally, an unexploited resource with our method is that the watermarker has \emph{feedback}, in that it knows the tokens that are sent to the decoder, which can be viewed as the output of the channel. While feedback typically does not increase capacity, it likely will improve probability of error or encoding complexity.

\textbf{Broader Impact.} Multi-bit watermarking of LLMs can curb AI misuse and add a layer of accountability, even beyond existing zero-bit watermarks. Multi-bit watermarks such as the one introduced in this paper can include information about the model version, user ID, or even the prompt, all of which are valuable for tracking and verifying LLM output. High-capacity low error-probability distortion-free watermarks will enable these societally-beneficial capabilities, all without impacting LLM performance. On the negative side, errors in message recovery can lead to misattribution, incorrectly linking a piece of text to the wrong source or user. By directly optimizing for message-level accuracy, \arcmark reduces this risk compared to methods that optimize only for bit-level accuracy.

\textbf{Limitations.} This work has several limitations that suggest directions for future research. First, our capacity analysis assumes specific distributions. Extending it to broader classes that better reflect deployed LLMs would give a fuller picture. More broadly, the information-theoretic framework developed here opens the door to more broader capacity characterizations and code constructions beyond random linear codes. Pursuing these directions would not only sharpen the theoretical foundations of multi-bit watermarking but also yield codes with higher recovery rates, stronger accuracy guarantees, and explicit robustness to adversarial edits such as text substitution and deletion.

% Multi-bit watermarking of LLMs can curb AI misuse and add a layer of accountability, even beyond existing zero-bit watermarkers. Multi-bit watermarkers such as the one introduced in this paper can include information about the model version, user ID, or even the prompt, all of which are valuable for tracking and verifying LLM output. High-capacity low error-probability distortion-free watermarkers will enable these societally-beneficial capabilities, all without impacting LLM performance, which means they are more likely to be used in practice.
% ============================================================
%  Bibliography
% ============================================================
\newpage
\printbibliography[heading=bibintoc, title={References}]

% ============================================================
%  Appendix
% ============================================================
\newpage
\appendix

%%%%%%%%%%%%%%%%%%%%%%%
\clearpage

\startcontents[appendix]

\section*{Appendix}
\printcontents[appendix]{}{1}{}

\clearpage
\section{Additional Related Work Details}\label{app:related-work}
We review the zero-bit and multi-bit watermarking methods most closely related to \arcmark.

\textbf{Zero-bit Watermarks.} Zero-bit watermarking schemes aim to determine whether a given sequence of outputs was generated by a model. Many multi-bit watermarking methods build directly on ideas developed in the zero-bit setting, with the Red-Green watermark playing a central role. We therefore begin by reviewing several notable zero-bit watermarking approaches.

The first watermark for LLMs was proposed by \cite{kirchenbauer2023watermark}, commonly referred to as the Red-Green watermark. This method partitions the token vocabulary into two disjoint lists and biases generation by exponentially tilting the model’s next-token distribution toward one of the lists. In practice, the Red-Green watermark is implemented by adding a bias term to the logits of green-list tokens.  Subsequent work extended this framework by introducing alternative schemes for generating the random side information used in the partitioning \cite{kirchenbauer2023reliability}. Since then, watermarking for LLMs has been studied extensively. 
Most notably, SynthID \cite{dathathri2024scalable} produces watermarked tokens via a strategy called tournament sampling. 
This watermark has been adopted in industry and covers various modalities, including text, images, and video generation.

\textbf{Cryptographic Approaches to Watermarking.} 
Cryptographic work has also studied watermarking under stronger adversarial assumptions. \cite{christ2024pseudorandomerrorcorrectingcodes} consider watermarking schemes that remain hidden from efficient adversaries without the key while retaining robustness to edits. Separately, \cite{pmlr-v247-christ24a} analyze zero-bit undetectable watermark detection under a cryptographic-style security notion and introduce entropy-based metrics for characterizing when detection is possible.

\textbf{Information-Theoretic Analysis of Watermarking.} 
Watermarking has a long history in information theory \cite{chen2000design,moulin2003information,martinian2005authentication}, particularly through formulations based on the Gelfand--Pinsker (GP) channel \cite{gel1980coding,villan2006text,willems2000informationtheoretical}. These classical approaches typically study watermarking of sequences via joint typicality arguments and assume perfect knowledge of the underlying source distribution at the decoder. In contrast, in LLM watermarking the next-token distribution is unknown to the detector. \cite{long2025optimized} analyzes text watermarking through the lens of hypothesis testing with side information, formalizing the fundamental trade-off between detection power and distortion in generated text. Their analysis yields the Correlated Channel Watermark, which constructs an optimal coupling between the side information shared with the detector and a random partition of the LLM vocabulary. Complementarily, \cite{he2024universally} characterize the universal Type-II error under worst-case control of the Type-I error by jointly optimizing the watermarking scheme and detector. \cite{he2025distframework} study multi-bit distributional embedding in a setting where the output distribution is effectively known in the decoding analysis, leading to an entropy-based rate. In this paper, we cast multi-bit LLM watermarking as a channel coding problem, and use information-theoretic insights to design optimal coding schemes. To our knowledge, this is the first formal capacity characterization for multi-bit LLM watermarking.

\textbf{Distortion-free Watermarking via Optimal Transport.} \cite{tsurheavywater} formulates LLM watermarking as an optimal transport problem between the original next-token distribution and a watermarked distribution conditioned on random side information, a perspective that is especially effective in the low-entropy regime where next-token predictions are near-deterministic. This formulation leads to distortion-free watermarking schemes such as HeavyWater and SimplexWater. In this paper, we extend the optimal-transport construction of \cite{tsurheavywater} to a circular (modulo) setting, enabling channel-coding ideas to be integrated with optimal transport. In addition, we also introduce an explicit message encoding/decoding scheme for efficient multi-bit watermarking, rather than applying \cite{tsurheavywater} repeatedly for each message symbol.

\textbf{Multi-bit Watermarks.} BiMark \cite{feng2025bimark} builds on unbiased green–red list watermarking but strengthens the signal by applying $k$ sequential, weak reweighting layers per token. Each layer uses an independent pseudorandom green/red partition and a fair coin flip to slightly bias the distribution, and the token is sampled from the final reweighted distribution. For detection, the verifier reconstructs the same $k$ partitions and flips and checks how often each generated token falls on the expected “green” side; these $k$ weak votes are then aggregated via majority voting to recover the embedded bit. In contrast to \arcmark, which decodes messages jointly from a series of tokens, BiMark performs per-token voting and aggregates these local decisions. Our joint decoding enables global, sequence-level strategies that can outperform token-by-token decoding.

Multi-bit Watermark via Position Allocation (MPAC) \cite{mpac} extends red–green watermarking to embed multiple bits by assigning each generated token to a pseudorandom message position and using the corresponding bit to choose which vocabulary subset (“colorlist”) to bias during sampling. At detection, the verifier computes a histogram over the colorlist for each message position, and determines the embedded message by determining which color sequence appears the most. The primary distinction between \arcmark and MPAC is that MPAC is not distortion-free. In contrast, \arcmark supports multi-bit watermarking without compromising text quality.

Other multi-bit watermarks include \cite{qupaper}, a non-distortion-free method based on pseudo-random segment assignment and \cite{xu2026xmarkreliablemultibitwatermarking}, a non-distortion-free scheme that builds an evergreen token set by intersecting green lists from multiple vocabulary permutations and uses a constrained token-shard mapping decoder for improved recovery at short text lengths. Among distortion-free methods, \cite{cui2026mc2markdistortionfreemultibitwatermarking} proposes a multi-bit framework using multi-layer sequential reweighting; \cite{jiang2026mirrormarkdistortionfreemultibitwatermark} embeds messages by mirroring sampling randomness in a measure-preserving manner; and \cite{stealthink} randomly categorizes the LLM vocabulary into message symbols and applies red-green reweighting. None of these distortion-free methods provides public implementations and we therefore cannot benchmark against them.

Achieving distortion-free watermarking can be considered the gold standard of the LLM watermarking problem, as it embeds messages without compromising text quality, making our problem setting strictly more challenging than that of non-distortion-free methods. We therefore benchmark  \arcmark primarily against BiMark~\cite{feng2025bimark}, and include MPAC~\cite{mpac} only as a reference point.
%%%%%%%%%%%%%%%%%%%%
\section{\arcmark Encoding and Decoding Algorithms}\label{app:alg}

The \arcmark encoding and decoding procedures described in Sec.~\ref{sec:arcmark} are summarized in Algorithms~\ref{alg1} and~\ref{alg2}, respectively.

\begin{algorithm}[h]
\caption{ARCMARK Encoding / Watermark Embedding}
\label{alg1}
\begin{algorithmic}[1]
\Require Message $m \in \{0,1\}^k$, side information $\{(G_t,v_t,\Pi_t)\}_{t=1}^n$, LLM next-token distributions $\{Q_{X_t}\}_{t=1}^n$, parameters $(p,r,\phi)$
\Ensure Watermarked token sequence $x_{1:n}$

\For{$t=1$ to $n$}
    \State Compute codeword symbol $C_m(t)=m\cdot G_t\text{ mod }p$
    \State Compute target angle
    \[
    z_t = \left(\frac{2\pi C_m(t)}{p} + \frac{2\pi v_t}{r} + \phi\right) \bmod 2\pi
    \]
    \State Form cost matrix $C^{(t)} \in \mathbb{R}^{N \times r}$ with entries
    \[
    C^{(t)}_{i,j}
    =
    d\!\left(
    \frac{2\pi \Pi_t(i)}{N},
    \left(\frac{2\pi C_m(t)}{p}+\frac{2\pi j}{r}+\phi\right)\bmod 2\pi
    \right)
    \]
    \State Solve the optimal transport problem
    \[
    Q^*_{X_t,Z_t}
    =
    \arg\min_{Q_{X_t|Z_t}}
    \mathbb{E}[d(2\pi \Pi_t(X_t)/N, Z_t)]
    \quad
    \text{s.t. }
    \mathbb{E}_{Z_t}[Q_{X_t|Z_t}] = Q_{X_t}
    \]
    \State Extract conditional watermarked distribution
    \[
    Q^*_{X_t|Z_t=z_t}(x) = r \, Q^*_{X_t,Z_t}(x,z_t)
    \]
    \State Sample token $x_t \sim Q^*_{X_t|Z_t=z_t}$
\EndFor
\State \Return $x_{1:n}$
\end{algorithmic}
\end{algorithm}

\begin{algorithm}[h]
\caption{ARCMARK Decoding}
\label{alg2}
\begin{algorithmic}[1]
\Require Received tokens $x_{1:n}$, side information $\{(v_t,\Pi_t)\}_{t=1}^n$, generator matrix $G$, message set $\mathcal{M}=\{0,1\}^k$, parameters $(p,\phi)$, nondecreasing score function $f$
\Ensure Decoded message $\hat{m}$

\For{$t=1$ to $n$}
    \State Estimate received token angle
    \[
    \hat{z}_t = \frac{2\pi \Pi_t(x_t)}{N}
    \]
    \State Remove shared randomness and estimate codeword angle
    \[
    \hat{C}(t)
    =
    \left(
    \frac{2\pi \Pi_t(x_t)}{N}
    -
    \frac{2\pi v_t}{r}
    \right)\bmod 2\pi
    \]
\EndFor

\ForAll{$m \in \mathcal{M}$}
    \State Compute codeword $C_m = mG$
    \State Form angular codeword representation
    \[
    C_m^{\mathrm{ang}}
    =
    \left[
    \frac{2\pi C_m(1)}{p}+\phi,\dots,
    \frac{2\pi C_m(n)}{p}+\phi
    \right]
    \]
    \State Compute distance score
    \[
    D_m = \sum_{t=1}^n f\!\left(d\!\left(\hat{C}(t), C_m^{\mathrm{ang}}(t)\right)\right)
    \]
\EndFor

\State Decode by minimum-distance rule
\[
\hat{m} = \arg\min_{m \in \mathcal{M}} D_m
\]
\State \Return $\hat{m}$
\end{algorithmic}
\end{algorithm}
%%%%%%%%%%%%%%%%%%%%%
\section{Proof of Theorem~\ref{capacity}}\label{proof_capacity}

In this proof we use the vector notation $X^n=(X_1,X_2,\ldots,X_n)$.

\emph{Achievability:} We first prove that the capacity is at least equal to the quantity given in \eqref{capacity_formula}. Consider any discrete distribution $P_W$ and function $x(w,q)$ satisfying the condition in the capacity expression in \eqref{capacity_formula}; that is,
\begin{equation}\label{theorem_requirement}
    \Pr(X=x|Q=q)=q(x),\quad \forall x\in\calX,\forall q\in\Delta_{\calX}
\end{equation}
where $(W,Q)\sim P_W(w)P_Q(q)$, $X=x(W,Q)$. We synthesize a channel from $U$ to $X$ with common information $S$, where $U\in[0,1]$. By classical results in information theory (see \cite{elgamal2011network}, Section 7.4), we know that the rate $I(U;X|S)$ is achievable. We will show that this is equal to $I(W;X)$. First, assume that $\calS=[0,2\pi]$, and $P_S$ is uniform on this interval. Define a function $f:[0,2\pi]\to \calW$ such that, if $U\sim\text{Unif}[0,2\pi]$, then $f(U)$ is distributed according to $P_W$ (this is possible for any discrete distribution $P_W$). Also let $\oplus$ denote addition modulo $2\pi$. Let $W=f(U\oplus S)$, and then $X=x(W,Q)$.

We first show that this scheme is in fact distortion-free, as defined in \eqref{distortion-free}. Consider any $u\in[0,2\pi]$, and any $q\in\Delta_{\calX}$. Then, given any $u$, 
\begin{equation}
\Pr(X=x|U=u,Q=q)=\Pr(X=x|W=f(u\oplus S))=\sum_w P_W(w)\mathbf{1}_{\{x(w,q)=x)\}}=q(x)
\end{equation}
where we have used the fact that, for any $u$, $u\oplus S$ is uniformly distributed on $[0,2\pi]$, and so $f(u\oplus S)$ has the same distribution as $W$. Then the last equality holds by the assumption in \eqref{theorem_requirement}. This shows that the resulting code will satisfy the distortion-free requirement in \eqref{distortion-free}.

Now we prove that $I(U;X|S)=I(W;X)$. Note that $S$ is independent of $U\oplus S$. Since $Q$ is independent of the pair $(S,U)$, the three variables $S$, $U\oplus S$, and $Q$, are mutually independent. Since $X$ depends only on  $W=f(U\oplus S)$ and $Q$, it must be that $X$ is independent of $S$. Thus $H(X|S)=H(X)$. Moreover, since $W=f(U\oplus S)$, and $(U,S)\to W\to X$ is a Markov chain, we have
\begin{equation}
H(X|S,U)
=H(X|W).
\end{equation}
Thus we may write
\begin{align}
I(U;X|S)
&=H(X|S)-H(X|S,U)
\\&=H(X)-H(X|W)
\\&=I(W;X).
\end{align}

\emph{Converse:} Now we show that the capacity is no greater than the quantity given in \eqref{capacity_formula}. Consider any achievable rate $R$, where $R=\frac{k}{n}$. By the definition of achievability, there is a sequence of codes, one for each length $n$, each with rate $R$, and probability of error $P_e^{(n)}$, where $P_e^{(n)}\to 0$ as $n\to\infty$. We denote the message as $M$, which is selected uniformly from $\{0,1\}^k$. At time-step $i$, the token is denoted $X_i$, the shared secret is denoted $S_i$, and the LLM distribution is denoted $Q_i$. We allow the watermarker access to private randomness in generating its next token, which we denote $Z_i$. Thus, the watermarker decides $X_i$ based on $M$, $S^i$, $X^{i-1}$, $Z_i$, and $Q_i$. We define $W_i=(M,S^i,X^{i-1},Z^i$. Thus, $X_i$ is a deterministic function of $W_i$ and $Q_i$; we denote this function $x_i(w_i,q_i)$. Since the message $M$ is decoded from $X^n$ and $S^n$, by Fano's inequality,
\begin{equation}
    H(M|X^n,S^n)\le P_e^{(n)} nR=n\eps_n
\end{equation}
where $\eps_n=P_e^{(n)}R$, which goes to $0$ as $n\to\infty$. We now have the chain of inequalities
\begin{align}
nR&=H(M)\label{converse0}
\\&=I(M;X^n,S^n)+H(M|X^n,S^n)
\\&\le I(M;X^n,S^n)+n\eps_n\label{converse1}
\\&=\sum_{i=1}^n I(M;X_i,S_i|X^{i-1},S^{i-1})+n\eps_n
\\&=\sum_{i=1}^n [I(M;S_i|X^{i-1},S^{i-1})+I(M;X_i|X^{i-1},S^i)]+n\eps_n
\\&=\sum_{i=1}^n I(M;X_i|X^{i-1},S^i)+n\eps_n\label{converse2}
\\&\le \sum_{i=1}^n I(M,S^i,X^{i-1},Z_i;X_i)+n\eps_n
\\&=\sum_{i=1}^n I(W_i;X_i)+n\eps_n\label{converse3}
\end{align}
where \eqref{converse0} follows since $M$ is uniformly distributed on $\{0,1\}^k$ where $k=nR$, \eqref{converse1} follows from the above application of Fano's inequality, \eqref{converse2} holds since $S_i$ is independent of $(M,X^{i-1},S^{i-1})$, and in \eqref{converse3} we have used the definition of $W_i$. Recalling the definition of $x_i(w_i,q_i)$, for any $q\in\Delta_{\calX}$ and any $x\in\calX$,
\begin{align}
\Pr(x_i(W_i,q)=x)
&=\sum_{w_i:x_i(w_i,q)=x} P_{W_i}(w_i) 
\\&=\sum_{m,s^{i-1},x^{i-1}} P_{M,S^{i-1},X^{i-1}}(m,s^{i-1},x^{i-1})\nonumber 
\\&\qquad
\cdot\Pr_{S_i,Z_i}(X_i=x|W=m,X^{i-1}=x^{i-1},Q_i=q)
\\&=q(x)
\end{align}
where the last equality follows from the distortion-free requirement of the watermarker in \eqref{distortion-free}.
Thus, for each $i$, $P_{W_i}$ and $x_i(w_i,q_i)$ satisfy the condition in the theorem statement in \eqref{theorem_requirement}. This means that
\begin{equation}
nR\le n \left(\max_{P_W,x(w,q)} I(W;X)\right)+n\eps_n
\end{equation}
where again this max is over $P_W,x(w,q)$ satisfying \eqref{theorem_requirement}. Dividing by $n$ and taking a limit as $n\to\infty$ proves the converse bound.
%%%%%%%%%%%%%%%%%%%%%
%%%%%%%%%%%%%%%%%%%%%%%%
\section{Proof of Corollary~\ref{dist_class}}\label{proof_dist_class}
    Consider any distribution $P_W$ and function $x(w,q)$ satisfying the condition in Theorem~\ref{capacity}. Note that
    \begin{equation}
        P(X=x)=\sum_{q\in \Delta_{\calX}}P(Q=q)P(X=x|Q=q)=\sum_{q\in \Delta_{\calX}}P(Q=q)q(x)
    \end{equation}
    where the second equality follows from the condition in Theorem~\ref{capacity} that $P(X=x|Q=q)=q(x)$. That is, the distribution of $X$ is fixed by the problem setup, and unaffected by the optimization over $P_W$ and $x(w,q)$. Thus,

    \begin{align}\label{cap_simpl}
        C=\max_{P_W,x(w,q)}I(W;X)=H(X)-\min_{P_W,x(w,q)}H(X|W)
    \end{align}
    For $Q$ uniform on $\calP_2(\calX)$, each token is equally likely, which means $P(X=x)=\frac{1}{N}$, so
    \begin{equation}
        H(X)=\log N.
    \end{equation}
    Now consider,
    \begin{align}
        P(X=x|W=v)&=\sum_{q\in\Delta_{\calX}} P(Q=q)P(X=x|Q=q,W=w)
        \\&=\sum_{q\in\Delta_{\calX}} P(Q=q)\mathbf{1}_{\{x(q,w)=x\}}.
    \end{align}
    Recall that for $\calP_2(\calX)$, $q$ takes on $\binom{N}{2}$ different distributions, each uniform on two tokens. Let $q_{i,j}$ be the distribution on tokens $i,j$ each having probability $1/2$. By the requirement of the capacity expression $x(q_{i,j},w)$ can only be either $i$ or $j$. Thus, we can write
    \begin{equation}
        P(X=x|W=w)=\frac{1}{\binom{N}{2}}\sum_{i\in\calX\setminus\{x\}} \mathbf{1}_{\{x(q_{i,x},w)=x\}}.\label{conditional_distribution}
    \end{equation}
    Recall that
    \begin{equation}
        H(X|W)=-\sum_{w\in\calW} P(W=w) \sum_{x\in\calX} P(X=x|W=w)\log P(X=x|W=w).
    \end{equation}
    To minimize $H(X|W)$, we need to minimize the entropy of the conditional distribution given in \eqref{conditional_distribution} for each $w$.

\begin{table}[t]
\centering
\begin{subtable}[t]{0.48\textwidth}
  \centering
  \begin{tabular}{|c|c|c|c|}
    \hline
       &  $q_{1,2}$& $q_{1,3}$& $q_{2,3}$\\
       \hline
       $w_1$  & & &\\
       \hline
       $w_2$  & & &\\
       \hline
       $\vdots$  & $\vdots$ & $\vdots$ & $\vdots$\\
       \hline
       $w_d$  & & &\\
       \hline
  \end{tabular}
  \caption{3-token example: setting}
  \label{tab:example-left}
\end{subtable}\hfill
\begin{subtable}[t]{0.48\textwidth}
  \centering
  \begin{tabular}{|c|c|c|c|}
    \hline
       &  $q_{1,2}$& $q_{1,3}$& $q_{2,3}$\\
       \hline
       $w_1$  & $1$ & $1$ & $2$\\
       \hline
       $w_2$ & $2$  & $3$ & $3$\\
       \hline
  \end{tabular}
  \caption{3-token example: optimal solution}
  \label{tab:example-right}
\end{subtable}

\caption{3 token example.}
\label{3-token example}
\end{table}

Consider an example with $|\mathcal{X}|=3$, with $\calW=\{w_1,\ldots,w_d\}$, where $W$ is uniformly distributed on $\calW$. Minimizing $H(X|W)$ is equivalent to filling Table~\ref{tab:example-left} with token indices $\{1,2,3\}$ such that each column $q_{i,j}$, has to be filled with half token $i$ and half token $j$ to satisfy the distortion-free property, while minimizing the entropy of each row to minimize $H(X|W)$. To minimize the entropy of each row in Table~\ref{tab:example-left}, we want to maximize the use of the same index in as many columns as possible. However, one index can only appear in two out of the three columns, as column $q_{i,j}$ can only have indices $i$ or $j$. Thus, the optimal setting is when each row has two of the same index and one new index, as shown in Table~\ref{tab:example-right}. Thus, for this example, we have,
\begin{align}
    \min_{P_W, \ x(w,q)}H(X|W)&=H\left(\frac{2}{3},\frac{1}{3}\right)\\
    &=-\frac{2}{3}\log\frac{2}{3}-\frac{1}{3}\log\frac{1}{3}\\
    &=\log 3-\frac{2}{3}.
\end{align}
Thus, the capacity is
\begin{equation}
    C=\log 3 - \left(\log 3-\frac{2}{3}\right)=\frac{2}{3}.
\end{equation}

Next, we generalize this argument to arbitrary $|\mathcal{X}|=N$.

\begin{claim}\label{cl1}
For token distributions uniform on $\calP_2(\calX)$ with $\calX=\{1,\ldots,N\}$,
\begin{equation}
    \min_{P_W, x(q,w)} H(X|W) = \sum_{t=1}^{N-1}\frac{t}{\binom{N}{2}}\log\frac{t}{\binom{N}{2}}.
\end{equation}

\end{claim}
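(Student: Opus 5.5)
Since $Q$ is uniform on $\calP_2(\calX)$, I would fix a value $w$ and view the map $q_{i,j}\mapsto x(q_{i,j},w)\in\{i,j\}$ as a \emph{tournament} $T_w$ on the vertex set $\calX$: orient the edge $\{i,j\}$ toward the token chosen. By \eqref{conditional_distribution}, $P(X=x|W=w)=s_w(x)/\binom{N}{2}$, where $s_w(x)$ is the score (number of edges won) of $x$ in $T_w$. Hence $H(X|W=w)$ is the entropy of the normalized score sequence of $T_w$, and $H(X|W)=\sum_w P_W(w)H(X|W=w)$. The value to establish is then the entropy of the distribution $\bigl(t/\binom{N}{2}\bigr)_{t=0}^{N-1}$, namely $-\sum_{t=1}^{N-1}\frac{t}{\binom{N}{2}}\log\frac{t}{\binom{N}{2}}$. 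This is the sign convention consistent with Corollary~\ref{dist_class}; the displayed sum in the claim should carry a minus sign.

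\emph{Lower bound.} I would show that for every tournament, its score sequence is majorized by the transitive score sequence $(N-1,N-2,\ldots,1,0)$. Sort the scores in decreasing order, $s_{(1)}\ge\cdots\ge s_{(N)}$. Any $k$ vertices play $\binom{k}{2}$ games among themselves and at most $k(N-k)$ games against the rest. So
\[
\sum_{\ell\le k}s_{(\ell)}\le \binom{k}{2}+k(N-k)=\sum_{\ell=1}^{k}(N-\ell),
\]
and both sequences have the same total $\binom{N}{2}$. Since Shannon entropy is Schur-concave, $H(X|W=w)\ge H\bigl((t/\binom{N}{2})_{t=0}^{N-1}\bigr)$ for every $w$. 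Averaging over $P_W$ gives the lower bound for every admissible pair $(P_W,x(w,q))$. This bound does not even use the distortion-free constraint.

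\emph{Achievability.} I would take $W$ uniform on $\{w_1,w_2\}$. Under $w_1$, each $q_{i,j}$ outputs $\min(i,j)$, which is the transitive tournament. Under $w_2$, it outputs $\max(i,j)$, the reversed transitive tournament. Given $q_{i,j}$, $X$ then equals $i$ and $j$ with probability $1/2$ each, so the condition of Theorem~\ref{capacity} holds. Both rows have score multiset $\{0,1,\ldots,N-1\}$, so both attain the bound. This reproduces Table~\ref{tab:example-right} when $N=3$.

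The only step with real content is the majorization (Landau-type) inequality combined with Schur-concavity. The rest is bookkeeping: the reduction to tournaments and checking that the two-row construction is distortion-free. Substituting the minimum into \eqref{cap_simpl} with $H(X)=\log N$ then gives Corollary~\ref{dist_class}.
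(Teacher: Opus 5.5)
Your proof is correct and is essentially the paper's argument restated in tournament language. Both use the same counting bound $\binom{k}{2}+k(N-k)$ on the top-$k$ scores, majorization by $(N-1,\dots,1,0)$ with Schur-concavity (the paper cites Karamata), and the same two-row min/max construction for achievability. Your sign correction is also right: the paper's own proof ends with $-\sum_{t=1}^{N-1}\frac{t}{\binom{N}{2}}\log\frac{t}{\binom{N}{2}}$, consistent with Corollary~\ref{dist_class}. One small imprecision: your lower bound does use the support part of the distortion-free constraint, since that is what forces $x(q_{i,j},w)\in\{i,j\}$; only the equal-split part goes unused.
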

\begin{proof}[Proof of Claim~\ref{cl1}]

    We have the lower bound
    \begin{equation}
        H(X|W)=\sum_{w\in\calW} P_W(w)H(X|W=w)\ge \min_{w\in\calW} H(X|W=w).
    \end{equation}
    To further lower bound $H(X|W=w)$, we consider row for $W=w$ in the equivalent of Table~\ref{tab:example-left} for arbitrary $N$, and we prove a lower bound on the entropy of this row. 
 
Let $s_i$ denote the number of times the token index $i\in\{1,\dotsc,N\}$ is repeated in row $w$ of the table. WLOG, assume that the indices are arranged such that $s_1\geq s_2\geq\dotsc\geq s_N$. For every $t\in\{1,\dotsc,N\}$, we have,
\begin{align}
    \sum_{i=1}^ts_i
    &\le |\{(i,j)\in \{1,\ldots,N\}^2:i<j,\ i\le t\text{ or }j\le t\}|\label{upper_bound1}
    \\&=|\{(i,j)\in \{1,\ldots,N\}^2:i<j,\ i\le t,\ j>t\}|
    +|\{(i,j)\in \{1,\ldots,N\}^2:i<j,\ i\le t,\ j\le t\}|
    \\&= t(N-t)+\binom{t}{2}\label{upper_bound}
\end{align}
where \eqref{upper_bound} follows from the fact that the number of appearances of tokens $i\le t$ is upper bounded by the number of columns that are labeled by an index that is less than or equal to $t$.

In each of the $q_{i,j}$ columns, note that each token index $i$ only appears in $N-1$ columns, as $i$ can only pair up with the $N-1$ other indices. Now we define $s_1^*=N-1,s_2^*=N-2,\dotsc,s_{N-1}^*=1,s_N^*=0$. Observe that
\begin{align}
    \sum_{i=1}^ts_i^*=\sum_{i=1}^t(N-i)=t(N-t)+\binom{t}{2}
\end{align}
which achieves the upper bound in \eqref{upper_bound}. Therefore,
\begin{align}
\sum_{i=1}^ts_i\leq\sum_{i=1}^ts_i^*\quad\text{and}\quad\sum_{i=1}^Ns_i=\sum_{i=1}^Ns_i^*=\binom{N}{2}
\end{align}
Since 
%$H(B_\ell)=-\sum_{i=1}^N\frac{s_i}{\binom{N}{2}}\log\frac{s_i}{\binom{N}{2}}$ and 
$\phi(x)=-x\log x$ is concave,
\begin{align}
H(X|W=w)&=H\left(\frac{s_1}{\binom{N}{2}},\dotsc,\frac{s_N}{\binom{N}{2}}\right)
\\&\ge
    H\left(\frac{s_1^*}{\binom{N}{2}},\dotsc,\frac{s_N^*}{\binom{N}{2}}\right)
\\&=-\sum_{t=1}^{N-1} \frac{t}{\binom{N}{2}}\log \frac{t}{\binom{N}{2}},
\end{align}
where the inequality follows from Karamata's inequality. This proves that the required quantity is a lower bound on $H(X|W)$.

It remains to prove that there exists a construction that achieves the same value for $H(X|W)$. To do this, we consider a binary alphabet $\calW=\{w_1,w_2\}$, where $W$ is uniformly distributed between these two letters. For $w_1$, let $x(q_{i,j},w_1)$ be the smaller of $i,j$, and for $w_2$, let $x(q_{i,j},w_2)$ be the larger of $i,j$. It is easy to see that with this construction, for each row of the table, $s_i=s_i^*$: For example, in the $w_1$ row, the number of times $i$ appears is equal to the number of $j>i$, which is $N-i=s_i^*$. Thus it achieves the same entropy as above. This construction also satisfies the property that the column for $q_{i,j}$ contains half $i$ and half $j$, it satisfies the distortion-free property. This completes the proof.
\end{proof}
%%%%%%%%%%%%%%%
\section{Proof of Theorem~\ref{cap_achieve}}\label{proof_cap_ach}

Using the particularizations of \arcmark in \eqref{cap_choice3}, we can write $z_t$ as
\begin{equation}\label{cap_achieve_zt}
    z_t=\frac{2\pi}{N}\left(C_m(t)+v_t+\frac{1}{4}\right)\text{ mod }2\pi
\end{equation}
where $C_m(t)$ is the codeword symbol for message $m$ at time $t$, $v_t$ is part of the shared information, chosen uniformly at random from $\{0,\ldots,N-1\}$. Also $C_m^{\text{ang}}$ becomes
\begin{equation}
    C_m^{\text{ang}}(t)=\frac{2\pi}{N}\left(C_m(t)+\frac{1}{4}\right).
\end{equation}
Finally, the decoding rule is
\begin{equation}\label{alt_decoding_rule}
    \hat{M}=\arg\min_m \sum_{t=1}^n -\log\left(1-\frac{d(\hat{C}(t),C_m^{\text{ang}}(t))}{d_{\max}}\right).
\end{equation}
Note that $d_{\max}=\pi-\frac{\pi}{2N}$ is the largest possible value of the distance $d$ using this construction.

In \arcmark, each token index $i=\{1,\dotsc,N\}$ at time $t$ is represented by an angle $A_i=\frac{\Pi_t(i)}{N}2\pi$. If $Q=q_{i,j}$, the uniform binary distribution on tokens $i$ and $j$, then the solution to the optimization in \eqref{ot} is deterministic, given by
\begin{align}
    P(X_t=x|z_t,q_{i,j})&=\begin{cases}
        \mathbf{1}_{\left\{d\left(z_t,A_i\right)< d\left(z_t,A_j\right)\right\}},&x=i \\
        \mathbf{1}_{\left\{d\left(z_t,A_i\right)> d\left(z_t,A_j\right)\right\}},&x=j \\
        0, & x\neq i,j.
    \end{cases}
\end{align}
Note that because of the added $1/4$ term in \eqref{cap_achieve_zt}, there is never a tie between the two distances. Let $u_t=C_m(t)+v_t\text{ mod }N$, so $z_t=\frac{2\pi}{N}(u_t+1/4)\text{ mod }2\pi$. Given $u_t$, we can list the points representing tokens on the circle in decreasing order by distance to $z_t$, as follows:
\begin{align}
    \frac{2\pi}{N} u_t,\ 
    \frac{2\pi}{N}(u_t+1)\text{ mod }2\pi,\ 
    \frac{2\pi}{N}(u_t-1)\text{ mod }2\pi,\ 
    \frac{2\pi}{N}(u_t+2)\text{ mod }2\pi,\ 
    \ldots,\ 
    \frac{2\pi}{N}\left(u_t-\left\lfloor\frac{N}{2}\right\rfloor\right)\text{ mod }2\pi.
\end{align}
Let $y_1,y_2,\ldots,y_N$ be the tokens such that these points are $A_{y_1},A_{y_2},\ldots,A_{y_N}$ respectively. Note that, for $a\in\{1,\ldots,N\}$,
\begin{equation}\label{distance_a}
    d(z_t,A_{y_a})=\frac{\pi(2a-1)}{2N}.
\end{equation}
As expected, the $y_a$ are in order by distance to $z_t$.

Thus, given $Q=q_{i,j}$ where $i=y_a$ and $j=y_b$, the optimal transport solution is simply to select $i$ if $a<b$, and $j$ if $a>b$. Thus, for any $a\in\{1,\ldots,N\}$, $X_t=y_a$ if $i=y_a$ and $j=y_b$ for any $b>a$. Thus,
\begin{align}\label{likelihood_fnc}
    P(X_t=y_a|z_t)=\frac{N-a}{\binom{N}{2}}.
\end{align}
From \eqref{distance_a}, we can write
\begin{equation}
    a=\frac{N}{\pi}\left(d(z_t,A_{y_a})+\frac{\pi}{2N}\right)
\end{equation}
so
\begin{align}
    P(X_t=y_a|z_t)&=\frac{1}{\binom{N}{2}}\left(N-\frac{N}{\pi}\left(d(z_t,A_{y_a})+\frac{\pi}{2N}\right)\right)\\
    &=\frac{N}{\pi \binom{N}{2}} \left(\pi-\frac{\pi}{2N}-d(z_t,A_{y_a})\right)\\
    &=\frac{N}{\pi \binom{N}{2}} \left(d_{\max}-d(z_t,A_{y_a})\right).
\end{align}
Since this holds for all $a$, and $y_a$ runs over all tokens in the alphabet for all $a$, this means that for any token $x$, we can rewrite the likelihood function in terms of the distance function:
\begin{equation}
    P(X_t=x|z_t)=\frac{N}{\pi \binom{N}{2}} \left(d_{\max}-d(z_t,A_x)\right).
\end{equation}
Recalling that $A_x=\frac{2\pi \Pi_t(x)}{N}$, we have
\begin{align}
    d(z_t,A_{x_t})&=d\left(C_m^{\text{ang}}+\frac{2\pi v_t}{N}\text{ mod }2\pi, \frac{2\pi \Pi_t(x_t)}{N}\right)
    \\&=d\left(C_m^{\text{ang}},\frac{2\pi}{N}\left(\Pi_t(x_t)-v_t\right)\text{ mod }2\pi\right)
    \\&=d(C_m^{\text{ang}},\hat{C}(t))
\end{align}
where we have used the fact that
\begin{equation}
    \hat{C}(t)=\frac{2\pi}{N}\left(\Pi_t(x_t)-v_t\right)\text{ mod }2\pi.
\end{equation}
Thus,
\begin{equation}
    P(X_t=x_t|z_t)=\frac{N}{\pi \binom{N}{2}} \left(d_{\max}-d(C_m^{\text{ang}},\hat{C}(t))\right).
\end{equation}
Considering this likelihood function for all time instances $t$, we have
\begin{equation}
    P(X^n=x^n|z^n)=\prod_{t=1}^n \frac{N}{\pi \binom{N}{2}} \left(d_{\max}-d(C_m^{\text{ang}},\hat{C}(t))\right).
\end{equation}
Therefore, the maximum likelihood decoder is identical to the decoding rule given in \eqref{alt_decoding_rule}.

By classical results in information theory (see \cite{polyanskiy2025information}, Theorem~18.13), random linear coding as in \eqref{linear_code} with maximum likelihood decoding can achieve arbitrarily small probability of error for any rate below the mutual information of the channel, assuming the input distribution is uniform. In this case, the channel has input given by the code sequence $C_m(t)$, the output is the token $X_t$, and we have side information $W_t$, known to both encoder and decoder (so we can condition on $W_t$ in the mutual information). Based on the calculation of the likelihood function in \eqref{likelihood_fnc},
\begin{align}
    I(C(t);X_t|W_t)&=H(X_t|W_t)-H(X_t|C(t),W_t)
    \\&=H(X_t)-H(X_t|Z_t)
    \\&=\log N+\sum_{i=1}^{N-1} \frac{i}{\binom{N}{2}}\log \frac{i}{\binom{N}{2}}.
\end{align}
Since this quantity matches the capacity found in Corollary~\ref{dist_class}, this proves that \arcmark achieves capacity for this setting.
%%%%%%%%
%%%%%%%%
\section{Connections Between \arcmark and the Capacity Result}\label{appendix_connections}

The capacity result of Theorem~\ref{capacity} applies given certain assumptions on the underlying token distributions, while \arcmark is a scheme that can be applied to any LLM even if these assumptions are not satisfied. However, several design aspects of \arcmark take inspiration from Theorem~\ref{capacity}.

First, \arcmark uses random linear codes because they are known to achieve capacity for channels where the optimal input distribution is uniform, as it is in this case.

Secondly, the achievability part of the proof of Theorem~\ref{capacity} in Appendix~\ref{proof_capacity} introduces a specific achievable watermarking scheme that, while not practical as \arcmark, shares some common characteristics. This theoretical scheme uses a random variable $U$ to represent the ``input'' to a channel, and $S$ to represent the side information. These variables are in the interval $[0,2\pi]$, and are then added together with addition modulo $2\pi$; this sum is then further processed to create the watermarked token. This is identical to the operations using points on a circle in \arcmark. (In \arcmark, the shift $\frac{2\pi v_t}{r}$ is added to the point on the circle in \eqref{eq:zt-map}; in the theoretical scheme this value is denoted as $S$.) The main difference is that in \arcmark, these symbols are taken from finite sets, rather than the continuous interval as in the theoretical scheme. Using finite sets makes the scheme more practical, but requires further design choices: Namely, the use of the random mapping $Q_{X_t|Z_t}$ is necessary because $Z_t$ only takes values in a finite set. If it took values on the continuous interval as in the achievability proof, the mapping could be deterministic while yielding any desired token distribution, as in the use of  the $f$ function in the proof. However, this is not possible with values in a finite set. Optimal transport naturally allows us to ensure the correct token distribution while minimizing the distance to the codeword.

%%%%
%%%%%%%%%
\section{Additional Message Accuracy Results}\label{app:message-acc}
Figure~\ref{fig:llama3-mistral-msg-acc-by-text-length} reports results on Llama3-8B and Mistral-7B for 2-, 3-, and 4-byte payloads. Figure~\ref{fig:message-acc-8bit} shows message accuracy for a 1-byte payload on Llama3-8B, Qwen3-8B, and Mistral-7B. The trends are consistent with those reported in Section~\ref{subsec:message-acc}.
\begin{figure}[t]
    \centering

    \begin{subfigure}{0.32\linewidth}
        \centering
        \includegraphics[width=\linewidth]{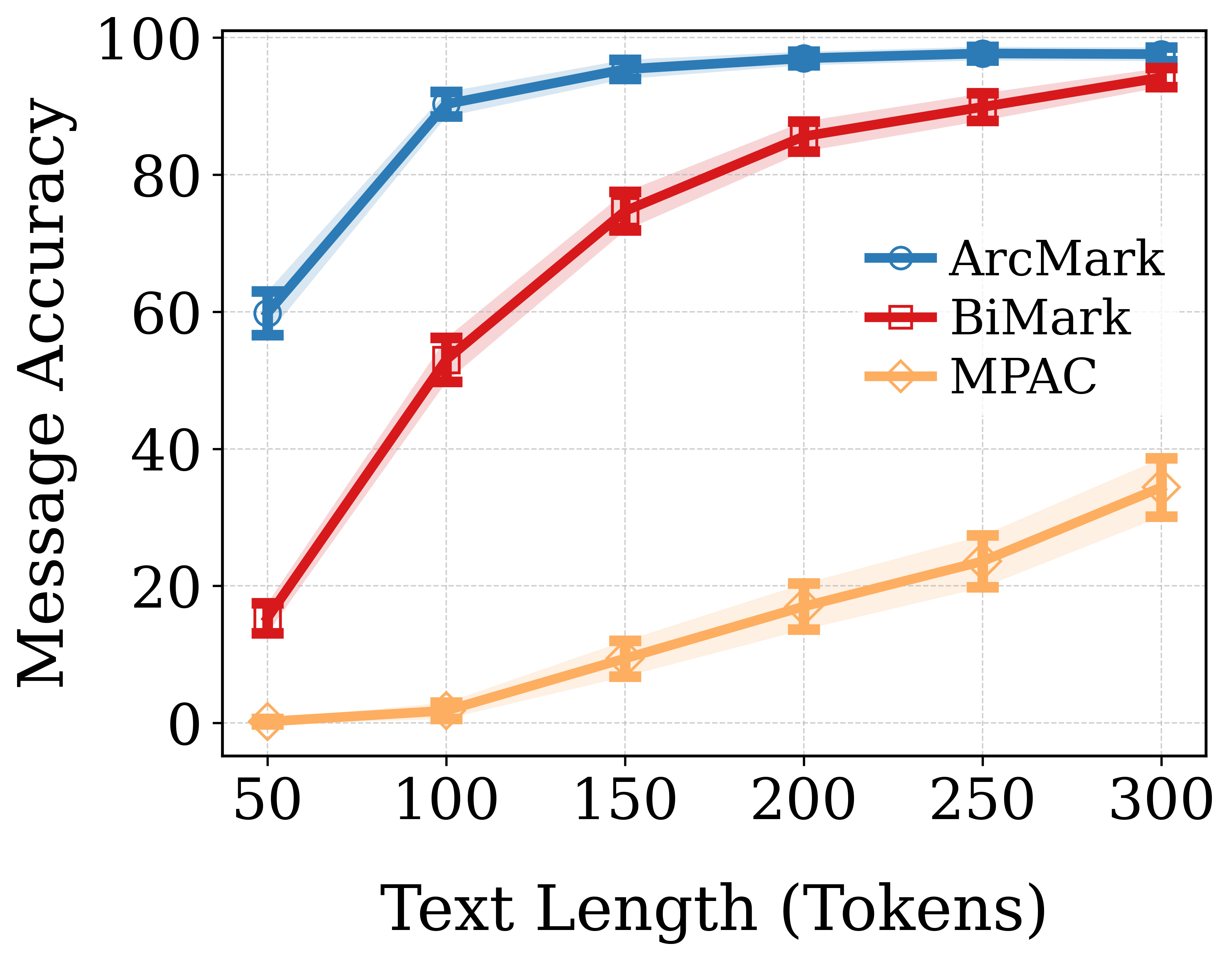}
        \label{fig:llama3-16bit-msg-acc}
    \end{subfigure}
    \hfill
    \begin{subfigure}{0.32\linewidth}
        \centering
        \includegraphics[width=\linewidth]{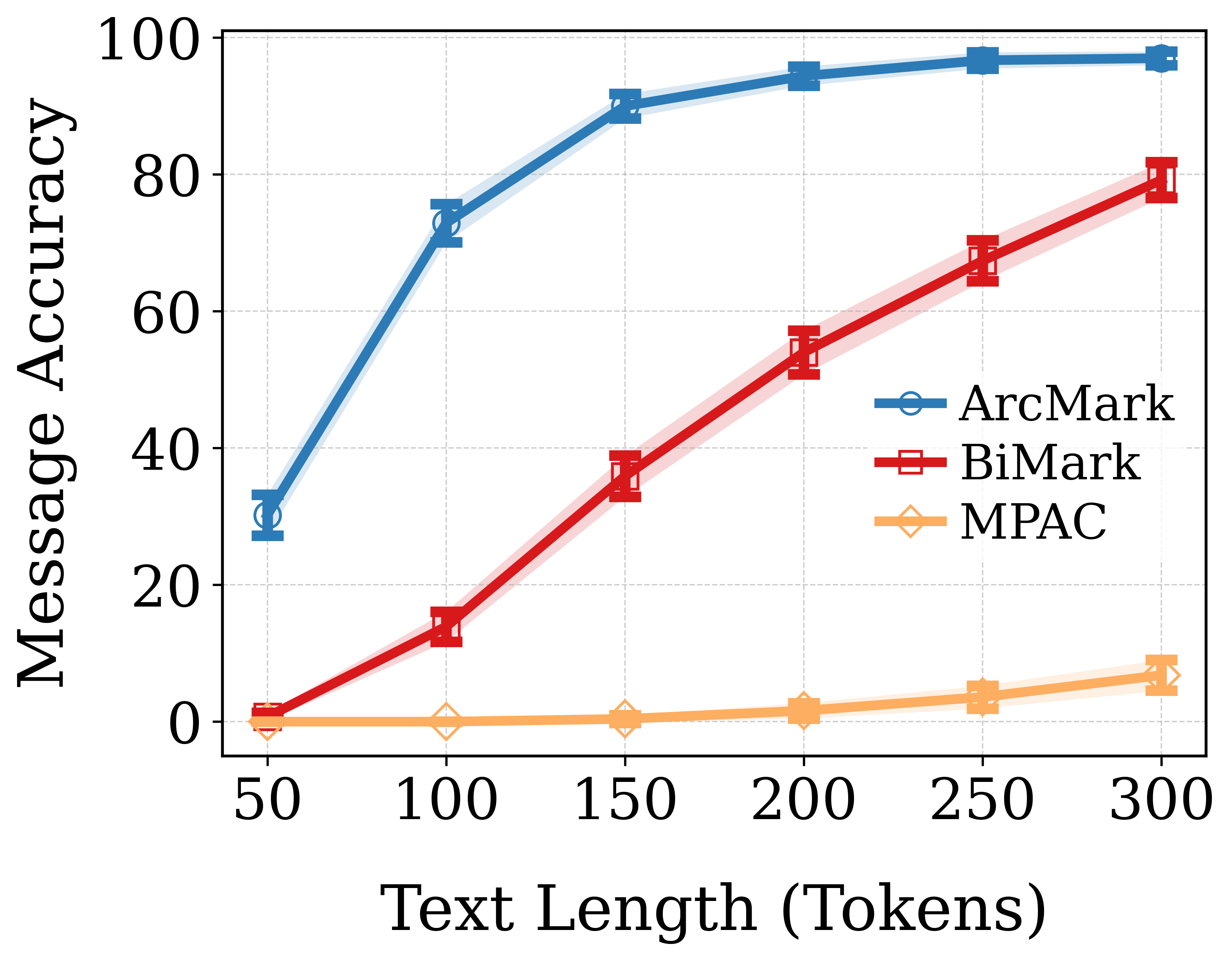}
        \label{fig:llama3-24bit-msg-acc}
    \end{subfigure}
    \hfill
    \begin{subfigure}{0.32\linewidth}
        \centering
        \includegraphics[width=\linewidth]{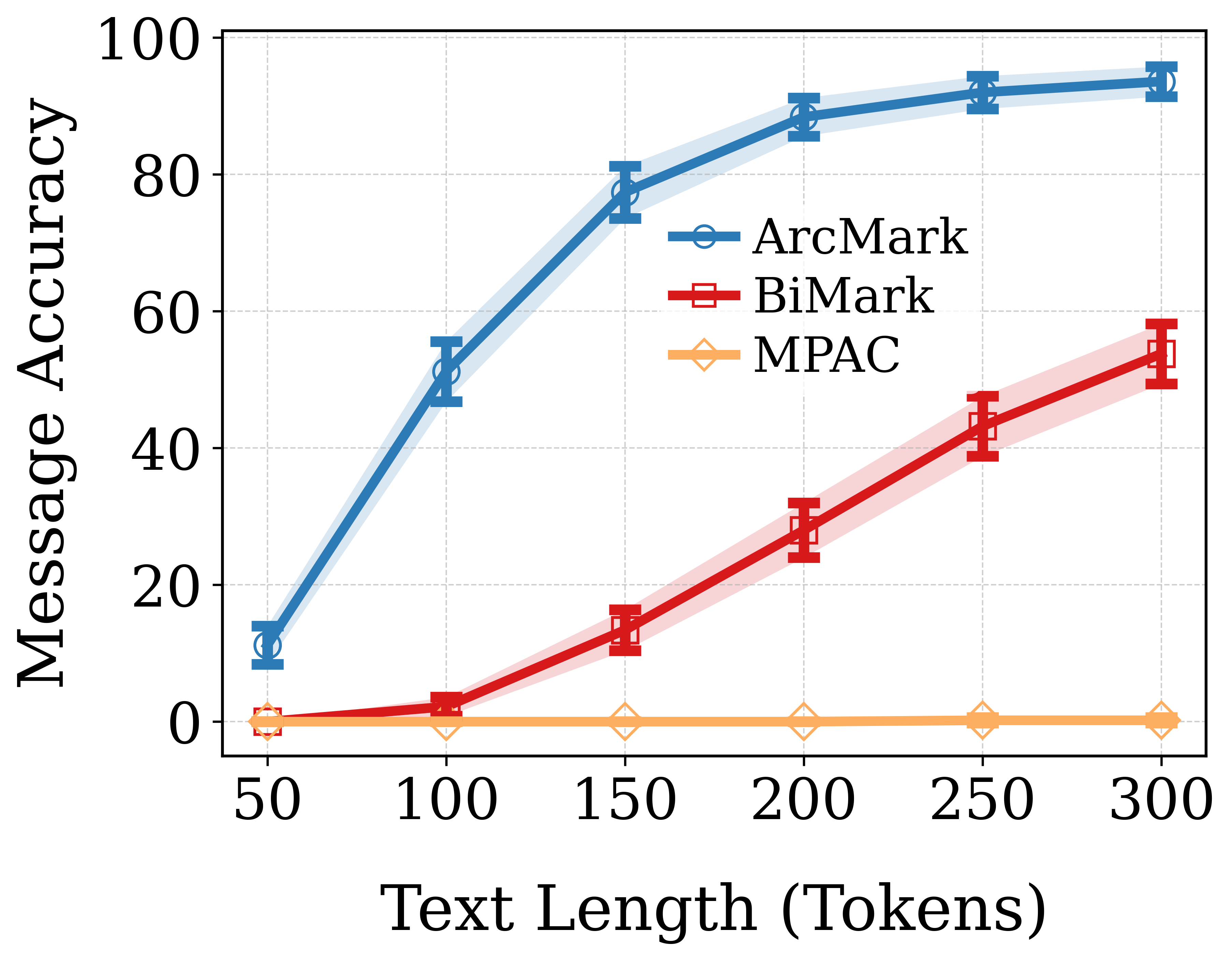}
        \label{fig:llama3-32bit-msg-acc}
    \end{subfigure}

    \vspace{0.5em}

    \includegraphics[width=0.32\linewidth]{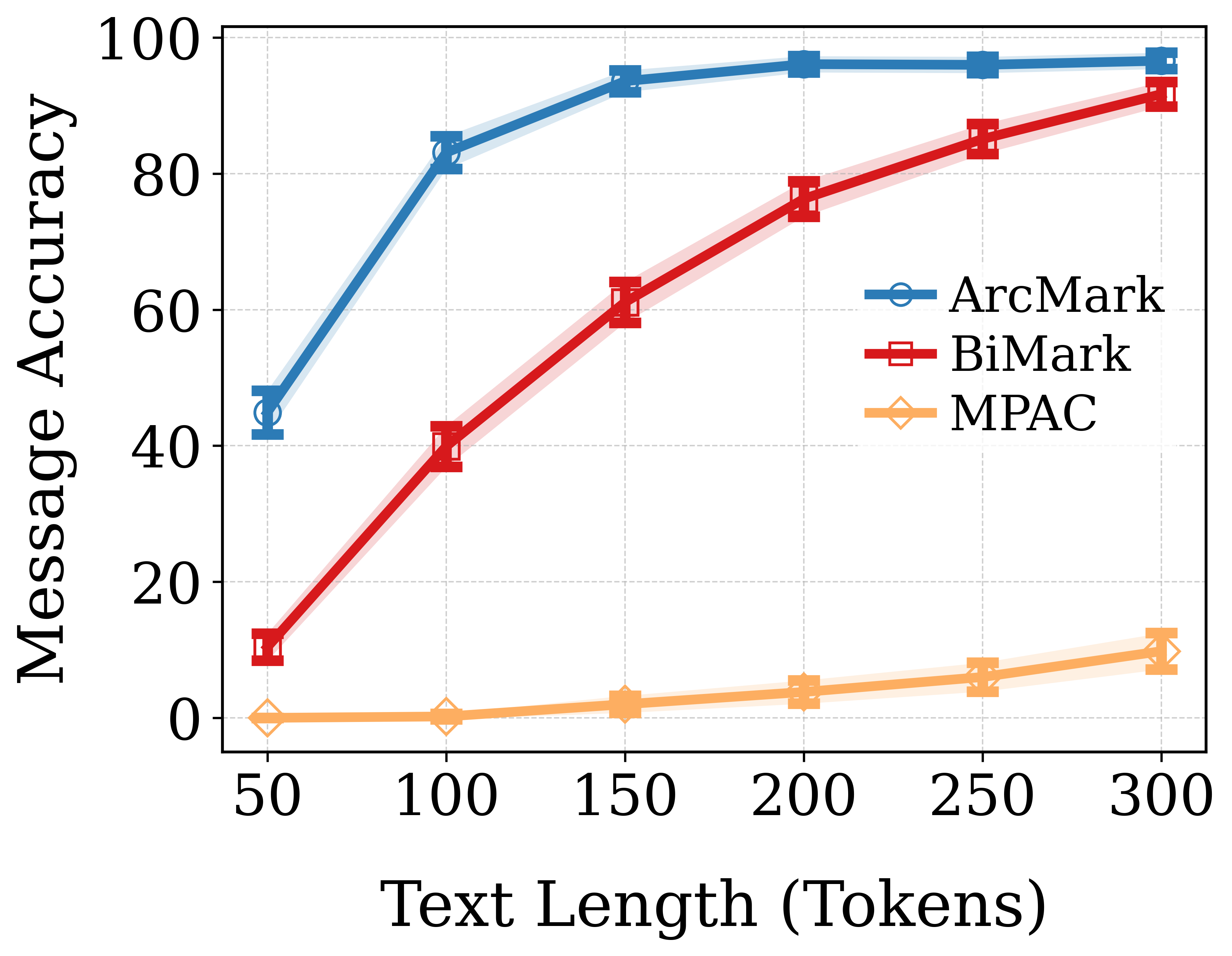}
    \hfill
    \includegraphics[width=0.32\linewidth]{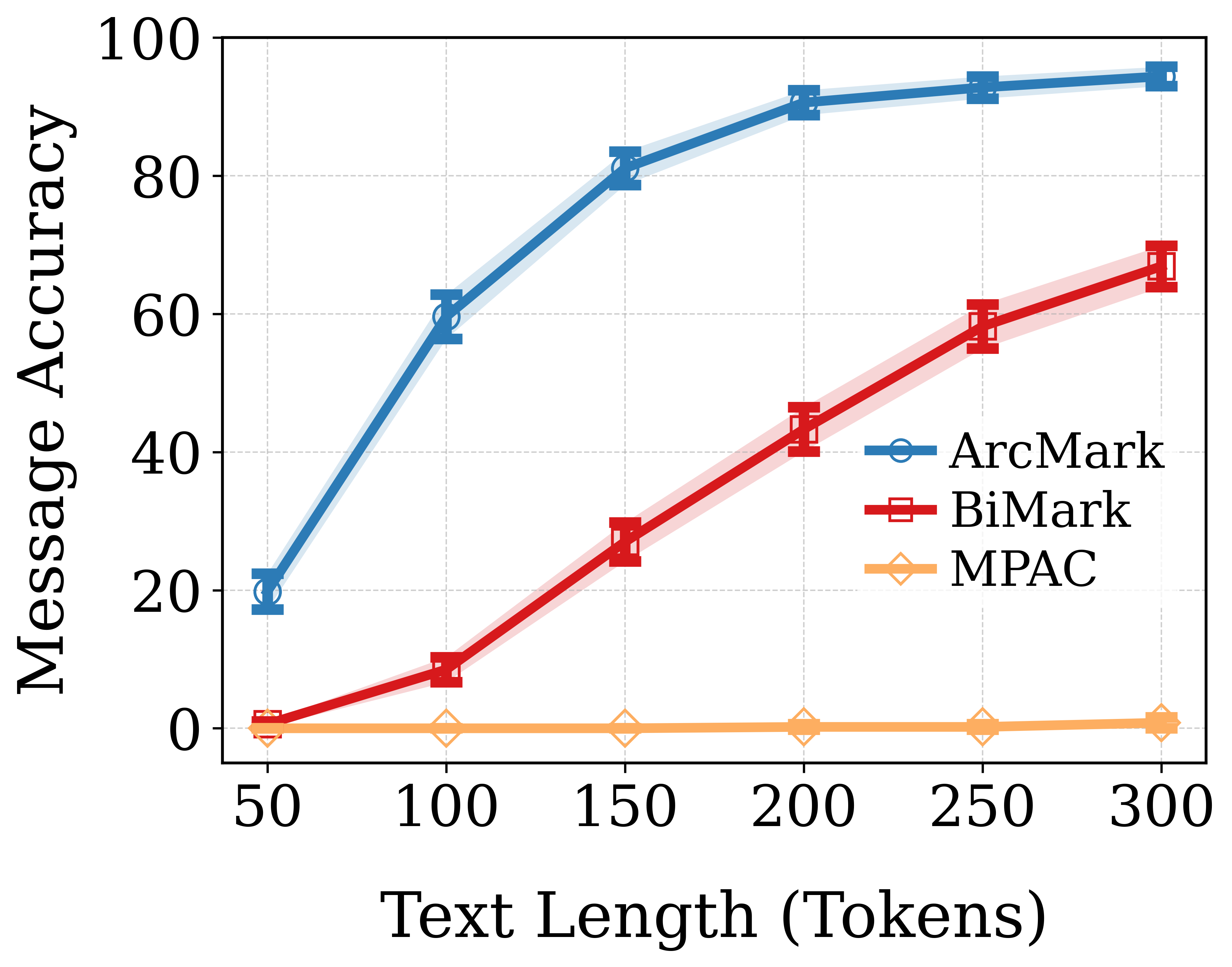}
    \hfill
    \includegraphics[width=0.32\linewidth]{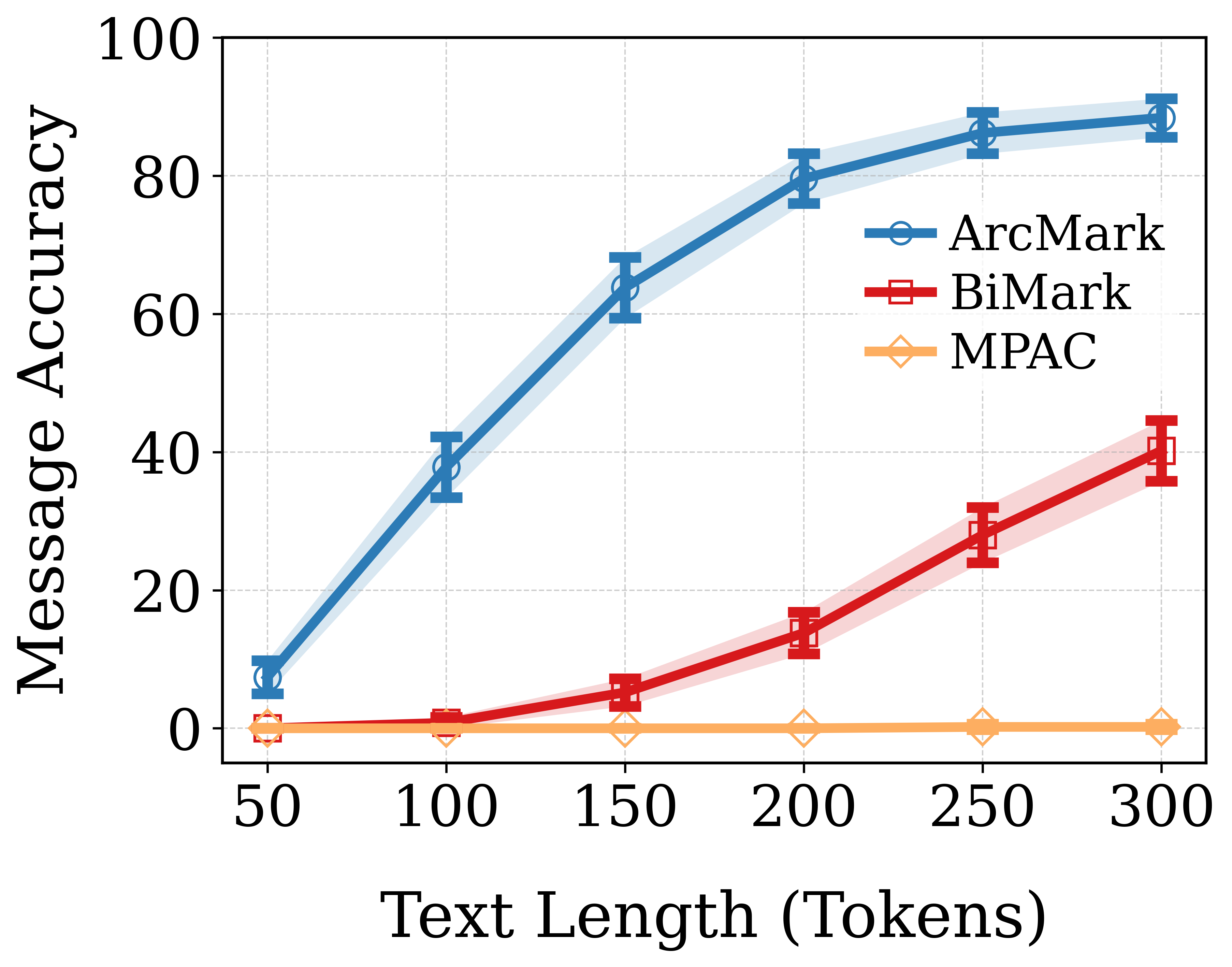}

    \caption{Message accuracy on Llama3-8B (top) and Mistral-7B (bottom) for watermark payloads of 2 bytes (left), 3 bytes (middle), and 4 bytes (right), averaged over 1000 trials for 2- and 3-byte payloads and 500 trials for 4-byte payloads. Error bars indicate SEM. Since recovering the full message requires every bit to be correct, the task becomes increasingly difficult as payload size grows. \arcmark is designed to directly optimize message-level recovery, which explains its growing advantage as payload size increases.}
    \label{fig:llama3-mistral-msg-acc-by-text-length}
\end{figure}
\begin{figure}[h]
\centering
\includegraphics[width=0.32\linewidth]{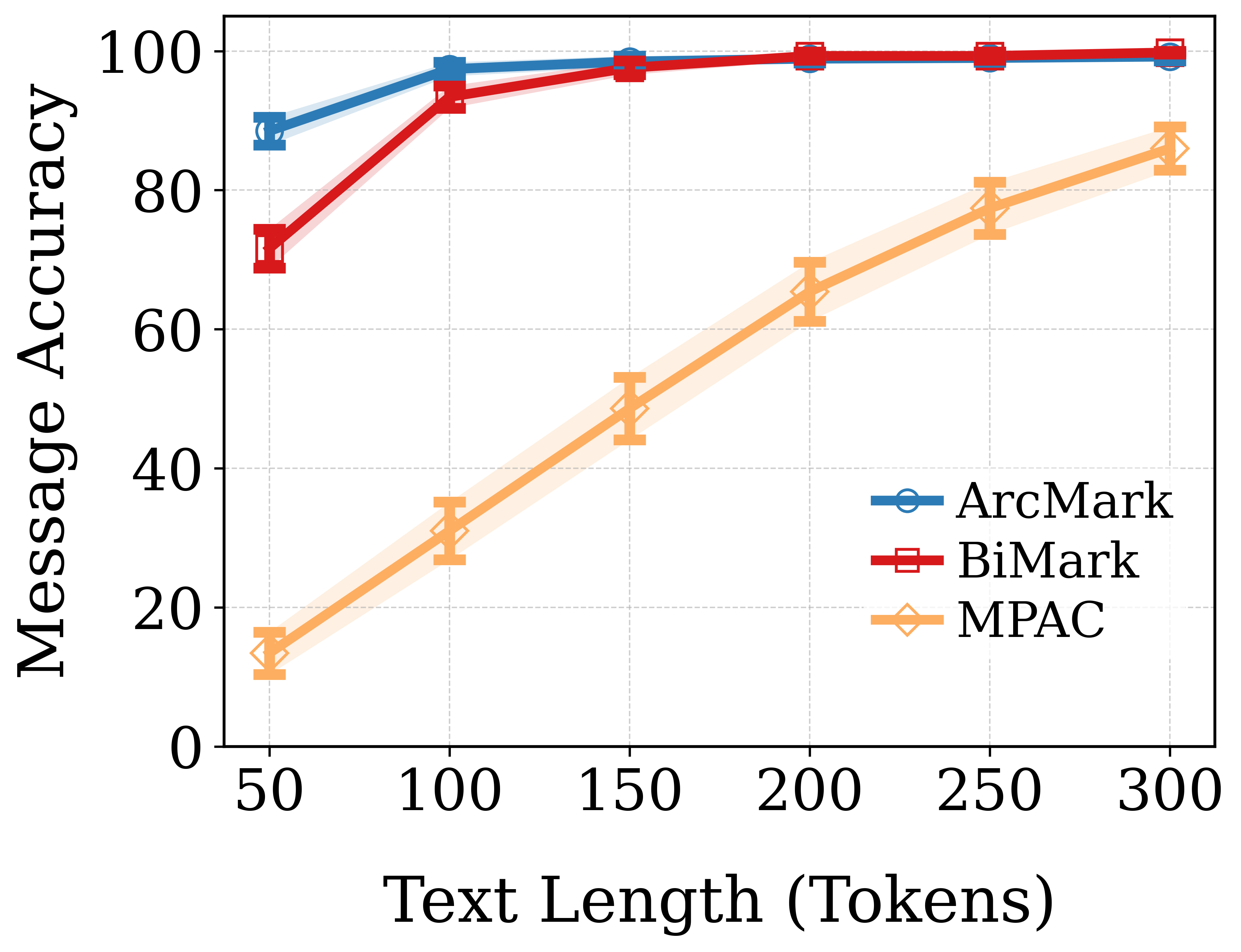}
\hfill
\includegraphics[width=0.32\linewidth]{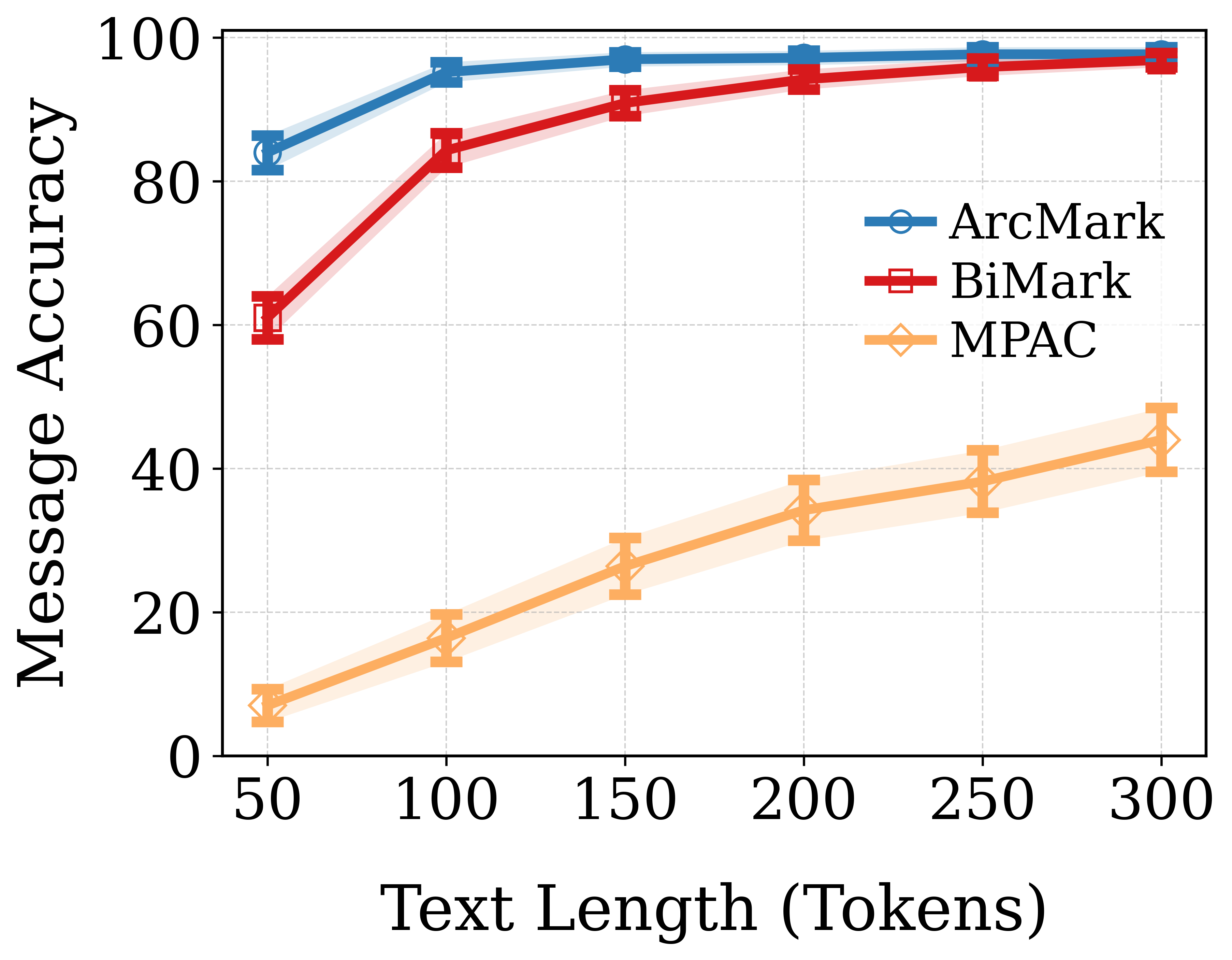}
\hfill
\includegraphics[width=0.32\linewidth]{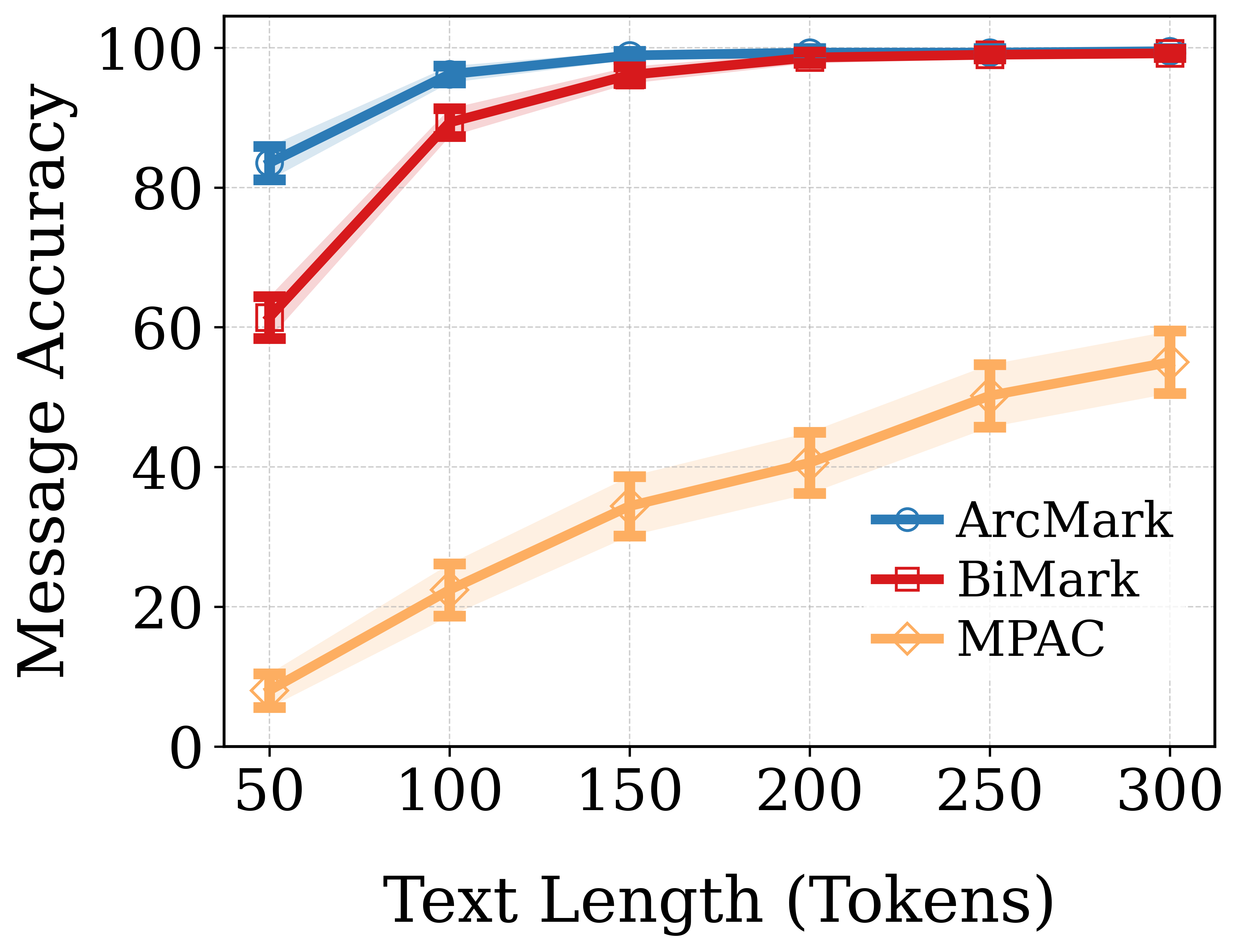}
\caption{Message accuracy on Llama3-8B (left), Qwen3-8B (center), and Mistral-7B (right) for a 1-byte payload, averaged over 1000 trials. Error bars indicate SEM.}
\label{fig:message-acc-8bit}
\end{figure}

%%%%%%%%%%%%%%%%%%%%%%%%%%%%%%%%%%%%%%%%%%%%%%%%
\section{Additional Attack Results}\label{app:attack}

Table~\ref{tab:msg-acc-french} and Table~\ref{tab:msg-acc-french-qwen-16bit} report message accuracy for a 2-byte payload under French back-translation (BT) attacks on Llama3-8B and Qwen3-8B respectively, averaged over 1000 trials. \arcmark consistently outperforms BiMark across all attack ratios, confirming the trends observed in Section~\ref{subsec:attack}.

\begin{table}[h]
\centering
\small
\setlength{\tabcolsep}{4.2pt}
\renewcommand{\arraystretch}{1.08}
\caption{Message accuracy (\%) on Llama3-8B for a 2-byte payload under no attack and French BT attacks, averaged over 1000 trials. $\pm$ indicates SEM. Bold indicates the better method.}
\label{tab:msg-acc-french}
\resizebox{\linewidth}{!}{%
\begin{tabular}{@{}c cc cc cc cc@{}}
\toprule
& \multicolumn{2}{c}{No attack}
& \multicolumn{2}{c}{BT $r=0.1$}
& \multicolumn{2}{c}{BT $r=0.2$}
& \multicolumn{2}{c}{BT $r=0.3$} \\
\cmidrule(lr){2-3}
\cmidrule(lr){4-5}
\cmidrule(lr){6-7}
\cmidrule(l){8-9}
Tokens
& ArcMark & BiMark
& ArcMark & BiMark
& ArcMark & BiMark
& ArcMark & BiMark \\
\midrule
50  & \textbf{59.8} $\pm$ 1.6 & 15.3 $\pm$ 1.1 & \textbf{37.7} $\pm$ 1.5 & 10.1 $\pm$ 1.0 & \textbf{32.8} $\pm$ 1.5 & 8.6 $\pm$ 0.9  & \textbf{25.7} $\pm$ 1.4 & 7.5 $\pm$ 0.8 \\
100 & \textbf{90.3} $\pm$ 0.9 & 53.0 $\pm$ 1.6 & \textbf{70.4} $\pm$ 1.4 & 34.9 $\pm$ 1.5 & \textbf{60.9} $\pm$ 1.5 & 30.4 $\pm$ 1.5 & \textbf{50.7} $\pm$ 1.6 & 25.5 $\pm$ 1.4 \\
150 & \textbf{95.4} $\pm$ 0.7 & 74.7 $\pm$ 1.4 & \textbf{83.9} $\pm$ 1.2 & 56.0 $\pm$ 1.6 & \textbf{76.4} $\pm$ 1.3 & 50.0 $\pm$ 1.6 & \textbf{67.4} $\pm$ 1.5 & 42.5 $\pm$ 1.6 \\
200 & \textbf{97.0} $\pm$ 0.5 & 85.6 $\pm$ 1.1 & \textbf{90.5} $\pm$ 0.9 & 68.8 $\pm$ 1.5 & \textbf{85.4} $\pm$ 1.1 & 63.8 $\pm$ 1.5 & \textbf{79.8} $\pm$ 1.3 & 57.0 $\pm$ 1.6 \\
250 & \textbf{97.7} $\pm$ 0.5 & 89.9 $\pm$ 1.0 & \textbf{94.1} $\pm$ 0.7 & 79.8 $\pm$ 1.3 & \textbf{91.5} $\pm$ 0.9 & 75.5 $\pm$ 1.4 & \textbf{87.7} $\pm$ 1.0 & 68.7 $\pm$ 1.5 \\
300 & \textbf{97.6} $\pm$ 0.5 & 94.2 $\pm$ 0.7 & \textbf{95.5} $\pm$ 0.7 & 85.8 $\pm$ 1.1 & \textbf{93.3} $\pm$ 0.8 & 82.3 $\pm$ 1.2 & \textbf{90.4} $\pm$ 0.9 & 77.1 $\pm$ 1.3 \\
\bottomrule
\end{tabular}%
}
\end{table}

\begin{table}[h]
\centering
\small
\setlength{\tabcolsep}{4.2pt}
\renewcommand{\arraystretch}{1.08}
\caption{Message accuracy (\%) on Qwen3-8B for a 2-byte payload under no attack and French BT attacks, averaged over 1000 trials. $\pm$ indicates SEM. Bold indicates the better method.}
\label{tab:msg-acc-french-qwen-16bit}
\resizebox{\linewidth}{!}{%
\begin{tabular}{@{}c cc cc cc cc@{}}
\toprule
& \multicolumn{2}{c}{No attack}
& \multicolumn{2}{c}{BT $r=0.1$}
& \multicolumn{2}{c}{BT $r=0.2$}
& \multicolumn{2}{c}{BT $r=0.3$} \\
\cmidrule(lr){2-3}
\cmidrule(lr){4-5}
\cmidrule(lr){6-7}
\cmidrule(l){8-9}
Tokens
& ArcMark & BiMark
& ArcMark & BiMark
& ArcMark & BiMark
& ArcMark & BiMark \\
\midrule
50  & \textbf{46.6} $\pm$ 1.6 & 9.0 $\pm$ 0.9  & \textbf{30.7} $\pm$ 1.5 & 6.4 $\pm$ 0.8  & \textbf{26.1} $\pm$ 1.4 & 5.4 $\pm$ 0.7  & \textbf{22.1} $\pm$ 1.3 & 4.7 $\pm$ 0.7 \\
100 & \textbf{80.5} $\pm$ 1.3 & 39.5 $\pm$ 1.5 & \textbf{59.7} $\pm$ 1.6 & 28.6 $\pm$ 1.4 & \textbf{53.2} $\pm$ 1.6 & 23.6 $\pm$ 1.3 & \textbf{44.7} $\pm$ 1.6 & 19.9 $\pm$ 1.3 \\
150 & \textbf{89.8} $\pm$ 1.0 & 59.5 $\pm$ 1.6 & \textbf{77.2} $\pm$ 1.3 & 46.5 $\pm$ 1.6 & \textbf{70.8} $\pm$ 1.4 & 41.6 $\pm$ 1.6 & \textbf{63.0} $\pm$ 1.5 & 37.0 $\pm$ 1.5 \\
200 & \textbf{91.6} $\pm$ 0.9 & 73.7 $\pm$ 1.4 & \textbf{84.0} $\pm$ 1.2 & 60.7 $\pm$ 1.5 & \textbf{80.0} $\pm$ 1.3 & 56.0 $\pm$ 1.6 & \textbf{73.5} $\pm$ 1.4 & 49.2 $\pm$ 1.6 \\
250 & \textbf{92.2} $\pm$ 0.8 & 79.1 $\pm$ 1.3 & \textbf{88.3} $\pm$ 1.0 & 68.0 $\pm$ 1.5 & \textbf{85.3} $\pm$ 1.1 & 61.4 $\pm$ 1.5 & \textbf{80.9} $\pm$ 1.2 & 56.0 $\pm$ 1.6 \\
300 & \textbf{93.7} $\pm$ 0.8 & 83.1 $\pm$ 1.2 & \textbf{88.6} $\pm$ 1.0 & 72.4 $\pm$ 1.4 & \textbf{86.4} $\pm$ 1.1 & 67.1 $\pm$ 1.5 & \textbf{83.4} $\pm$ 1.2 & 61.8 $\pm$ 1.5 \\
\bottomrule
\end{tabular}%
}
\end{table}

%%%%%%%%%%
\section{Perplexity}\label{app:ppl}
We use perplexity as a proxy for generation quality and to assess whether \arcmark preserves the model’s output distribution. Perplexity is computed on text generated with and without watermarking, evaluated under the same language model used for generation. This controls for model mismatch, so that observed differences can be attributed directly to the watermarking method. Table~\ref{tab:qwen-ppl-no-attack} in Section~\ref{subsec:ppl} and Table~\ref{tab:llama-ppl-no-attack} below show that \arcmark consistently achieves perplexity closer to the non-watermarked baseline than BiMark across all payload sizes and token lengths for Qwen3-8B and Llama3-8B, respectively.

\begin{table}[h]
\centering
\small
\setlength{\tabcolsep}{3.2pt}
\renewcommand{\arraystretch}{1.08}
\caption{Perplexity on Llama3-8B for 2-, 3-, and 4-byte watermarks, averaged over 1000 trials for 2 and 3 bytes and 500 trials for 4 bytes. Bold indicates the method closer to the non-watermarked baseline.}
\label{tab:llama-ppl-no-attack}
\resizebox{\linewidth}{!}{%
\begin{tabular}{@{}c c cc cc cc@{}}
\toprule
& & \multicolumn{2}{c}{2-byte} & \multicolumn{2}{c}{3-byte} & \multicolumn{2}{c}{4-byte} \\
\cmidrule(lr){3-4}
\cmidrule(lr){5-6}
\cmidrule(l){7-8}
Tokens & No watermark & ArcMark & BiMark & ArcMark & BiMark & ArcMark & BiMark \\
\midrule
50  & 6.180 $\pm$ 0.089 & \textbf{6.178 $\pm$ 0.095} & 6.427 $\pm$ 0.097 & \textbf{6.045 $\pm$ 0.084} & 6.645 $\pm$ 0.109 & \textbf{6.172 $\pm$ 0.128} & 6.463 $\pm$ 0.132 \\
100 & 5.772 $\pm$ 0.073 & \textbf{5.678 $\pm$ 0.071} & 5.975 $\pm$ 0.075 & \textbf{5.668 $\pm$ 0.070} & 6.084 $\pm$ 0.084 & \textbf{5.768 $\pm$ 0.106} & 6.078 $\pm$ 0.105 \\
150 & 5.604 $\pm$ 0.067 & \textbf{5.486 $\pm$ 0.066} & 5.840 $\pm$ 0.070 & \textbf{5.482 $\pm$ 0.065} & 5.929 $\pm$ 0.073 & \textbf{5.473 $\pm$ 0.096} & 5.925 $\pm$ 0.097 \\
200 & 5.507 $\pm$ 0.063 & \textbf{5.357 $\pm$ 0.062} & 5.782 $\pm$ 0.067 & \textbf{5.313 $\pm$ 0.062} & 5.882 $\pm$ 0.071 & \textbf{5.305 $\pm$ 0.089} & 5.843 $\pm$ 0.093 \\
250 & 5.436 $\pm$ 0.061 & \textbf{5.260 $\pm$ 0.061} & 5.766 $\pm$ 0.065 & \textbf{5.197 $\pm$ 0.060} & 5.845 $\pm$ 0.067 & \textbf{5.203 $\pm$ 0.087} & 5.815 $\pm$ 0.092 \\
300 & 5.373 $\pm$ 0.059 & \textbf{5.157 $\pm$ 0.060} & 5.768 $\pm$ 0.065 & \textbf{5.107 $\pm$ 0.059} & 5.831 $\pm$ 0.066 & \textbf{5.094 $\pm$ 0.085} & 5.815 $\pm$ 0.090 \\
\bottomrule
\end{tabular}%
}
\end{table}
%%%%%%%%%%%%%%%
%%%%%%%%%%%%%%%%%%%%%%%%%%%%%%%%%%%%%%%%%%%%

\section{Quality on Downstream Tasks}\label{app:downstream}

We evaluate ArcMark on three downstream tasks designed to test different aspects of output quality. Extractive question answering (Q\&A) tests whether watermarking preserves the factual content necessary to answer a question. Long-context code completion (LCC) \cite{lcc} tests the fidelity of a deterministic ground truth in an open-ended code-generation setting. HumanEval \cite{humaneval} pass@k tests the functional correctness of generated code with watermarks. In each setting, we compare ArcMark against both an unwatermarked baseline and BiMark on Llama3-8B and Qwen3-8B with 2-byte messages embedded in the generated text.

\paragraph{Q\&A:} Following the SQuAD v2 setup \cite{qanda}, we sample 200 answerable examples from the validation set with contexts of at least 200 characters. For each example, we use the first 100 tokens of the \emph{gold passage} as the prompt and generate a 100-token continuation under three conditions: (i) no watermark, (ii) ArcMark, and (iii) BiMark. We then run a pretrained extractive QA model, (distilbert-base-cased-distilled-squad \cite{distilbert}) on the resulting prompt–continuation pair together with the \emph{gold question}. Predictions are evaluated against the human reference answers using the standard SQuAD F1 and Exact Match (EM) metrics (see Table~\ref{tab:qa_squad}). As an upper bound, we also evaluate the QA model directly on the original gold passage. The gap between the original-passage condition and the unwatermarked continuation in Table~\ref{tab:qa_squad} reflects the loss incurred by replacing the \emph{gold passage} with LLM generated text. Relative to this generation baseline, both ArcMark and BiMark remain within one standard error of the unwatermarked condition on both F1 and EM. This suggests that, at 2 bytes, watermarking causes little to no additional degradation in downstream Q\&A performance.

\begin{table}[t]
\centering
\small
\caption{Q\&A results on SQuAD v2 with 2-byte watermarking.}
\label{tab:qa_squad}
\begin{tabular}{lcccc}
\toprule
& \multicolumn{2}{c}{\textbf{Llama3-8B}} & \multicolumn{2}{c}{\textbf{Qwen3-8B}} \\
\cmidrule(lr){2-3} \cmidrule(lr){4-5}
\textbf{Condition} & \textbf{F1} & \textbf{EM} & \textbf{F1} & \textbf{EM} \\
\midrule
Original passage (upper bound) & 0.768 $\pm$ 0.027 & 0.695 $\pm$ 0.033 & 0.768$\pm$0.027 & 0.695$\pm$0.033\\
Unwatermarked continuation     & 0.646 $\pm$ 0.031 & 0.56 $\pm$ 0.035 & 0.609$\pm$0.031 & 0.51$\pm$0.035\\
ArcMark                        & 0.615 $\pm$ 0.032 & 0.535 $\pm$ 0.035 & 0.635$\pm$0.031 & 0.56$\pm$0.035\\
BiMark                         & 0.617 $\pm$ 0.032 & 0.54 $\pm$ 0.035 & 0.599$\pm$0.032& 0.52$\pm$0.035\\
\bottomrule
\end{tabular}
\end{table}

\paragraph{LCC:} We sample 200 Python examples from Microsoft’s LCC dataset \cite{lcc}. For each example, we use the last 256 context tokens as the prompt and generate a 64-token continuation under: (i) no watermark, (ii) ArcMark, and (iii) BiMark. We compare each generated continuation against the reference completion using two standard metrics: character-level Exact Match (EM) and edit similarity, defined as $1-\frac{\text{Levenshtein distance}}{\text{max length}}$. As shown in Table~\ref{tab:lcc_python}, the performance differences across the three conditions are small relative to the standard errors. ArcMark remains closer to the unwatermarked baseline than BiMark on both Exact Match and edit similarity, and neither watermarking method causes a statistically meaningful drop in Python LCC performance at 2 bytes.
\begin{table}[h]
\centering
\small
\caption{LCC results on Python examples from LCC with 2-byte watermarking.}
\label{tab:lcc_python}
\begin{tabular}{lcccc}
\toprule
& \multicolumn{2}{c}{\textbf{Llama3-8B}} & \multicolumn{2}{c}{\textbf{Qwen3-8B}} \\
\cmidrule(lr){2-3} \cmidrule(lr){4-5}
\textbf{Condition} & \textbf{EM} & \textbf{Edit Sim.} & \textbf{EM} & \textbf{Edit Sim.} \\
\midrule
Unwatermarked & 0.205 $\pm$ 0.029 & 0.562 $\pm$ 0.022 & 0.245$\pm$ 0.031& 0.594$\pm$0.023\\
ArcMark       & 0.190 $\pm$ 0.028 & 0.540 $\pm$ 0.022 & 0.225$\pm$0.03& 0.585$\pm$0.0223\\
BiMark        & 0.180 $\pm$ 0.027 & 0.526 $\pm$ 0.022 & 0.25$\pm$0.031 & 0.598$\pm$0.0227\\
\bottomrule
\end{tabular}
\end{table}

\paragraph{HumanEval pass@k:} We evaluate multi-bit watermarking under functional correctness on the OpenAI HumanEval benchmark \cite{humaneval}. For each of the 164 coding problems, we use the function signature and docstring as the prompt and generate a 256-token completion under: (i) no watermark, (ii) ArcMark, and (iii) BiMark. Each completion is appended to the prompt to form a candidate program, which is then executed against the problem’s hidden unit tests. We report pass@5, based on the pass@k metric introduced in \cite{humaneval}. The results are shown in Table~\ref{tab:humaneval}, which shows that all three methods (unwatermarked, \arcmark, and Bimark) have comparable performance.
\begin{table}[t]
\centering
\small
\caption{Code generation results on HumanEval with 2-byte watermarking. We report pass@5 $\pm$ SEM.}
\label{tab:humaneval}

\begin{tabular}{lcc}
\toprule
\textbf{Condition} & \textbf{Llama3-8B} & \textbf{Qwen3-8B} \\
\midrule
Unwatermarked  & 0.748 $\pm$ 0.030 & 0.785 $\pm$ 0.027 \\
ArcMark     & 0.700 $\pm$ 0.030 & 0.797 $\pm$ 0.028 \\
BiMark      & 0.651 $\pm$ 0.033 & 0.769 $\pm$ 0.029 \\
\bottomrule
\end{tabular}

\end{table}
\vspace{-3mm}
\section{Zero-Bit Detection Results}\label{app:falsealarm}
\vspace{-3mm}
We evaluate zero-bit detection on Llama3-8B, Qwen3-8B, and Mistral-7B. The negative class consists of 1000 human-written articles from the C4 validation set, never passed through the model, and the positive class consists of 1000 watermarked texts generated by the model using C4 articles as prompts, with a 1-byte watermark. We report true positive rate (TPR) at a fixed false positive rate (FPR) of 1\% (at most 1 in 100 human-written texts incorrectly flagged as watermarked), with bootstrap SEM over 1000 resamples. Results in Table~\ref{tab:tpr-fpr1-all-models} show BiMark achieves modestly higher TPR at shorter text lengths, but both \arcmark and BiMark reach near-perfect detection accuracy from 150 tokens onward.
%The detection threshold is swept over all unique score values from both classes to trace the full ROC curve, and TPR at the target FPR is extracted from this curve. 
\begin{table}[h]
\centering
\scriptsize
\setlength{\tabcolsep}{4.2pt}
\renewcommand{\arraystretch}{1.08}
\caption{TPR (\%) at FPR = 1\% for zero-bit detection with a 1-byte watermark, across three models.}
\label{tab:tpr-fpr1-all-models}
\resizebox{\linewidth}{!}{%
\begin{tabular}{@{}c cc cc cc@{}}
\toprule
& \multicolumn{2}{c}{Llama3-8B}
& \multicolumn{2}{c}{Mistral-7B}
& \multicolumn{2}{c}{Qwen3-8B} \\
\cmidrule(lr){2-3}
\cmidrule(lr){4-5}
\cmidrule(l){6-7}
Tokens
& ArcMark & BiMark
& ArcMark & BiMark
& ArcMark & BiMark \\
\midrule
50
& 86.70 $\pm$ 0.05 & 85.50 $\pm$ 0.08
& 75.70 $\pm$ 0.09 & 84.60 $\pm$ 0.10
& 72.80 $\pm$ 0.07 & 80.60 $\pm$ 0.07 \\

100
& 97.50 $\pm$ 0.02 & 98.80 $\pm$ 0.01
& 94.70 $\pm$ 0.04 & 98.90 $\pm$ 0.01
& 91.70 $\pm$ 0.04 & 95.90 $\pm$ 0.02 \\

150
& 99.20 $\pm$ 0.01 & 99.80 $\pm$ 0.00
& 98.80 $\pm$ 0.01 & 99.60 $\pm$ 0.01
& 95.90 $\pm$ 0.02 & 98.30 $\pm$ 0.02 \\

200
& 100.00 $\pm$ 0.00 & 100.00 $\pm$ 0.00
& 99.20 $\pm$ 0.01 & 99.80 $\pm$ 0.00
& 97.50 $\pm$ 0.02 & 97.10 $\pm$ 0.02 \\

250
& 99.90 $\pm$ 0.00 & 100.00 $\pm$ 0.00
& 99.60 $\pm$ 0.01 & 100.00 $\pm$ 0.00
& 97.90 $\pm$ 0.02 & 98.70 $\pm$ 0.01 \\
\bottomrule
\end{tabular}%
}
\end{table}
\vspace{-3mm}
\section{Ablation Study on Side Information Resolution}
\label{app:ablation}
\vspace{-3mm}
Recall that $r$ denotes the number of discrete side information values used in the optimal transport formulation (see Eq.~\ref{eq:cost_matrix}). This parameter determines the size of the OT cost matrix and therefore the computational cost of the Sinkhorn solver. We study how varying $r \in \{64, 256, 512\}$ affects message accuracy. As shown in Figure~\ref{fig:side_info_resolution_ablation}, performance is similar across all three settings, indicating that increasing the side information resolution has little practical impact on extraction accuracy. Since larger values of $r$ incur higher computational overhead, these results indicate that a small value such as $r=64$ is sufficient in practice.

\begin{figure*}[h]
    \centering
    \begin{subfigure}[t]{0.45\linewidth}
        \centering
        \includegraphics[width=\linewidth]{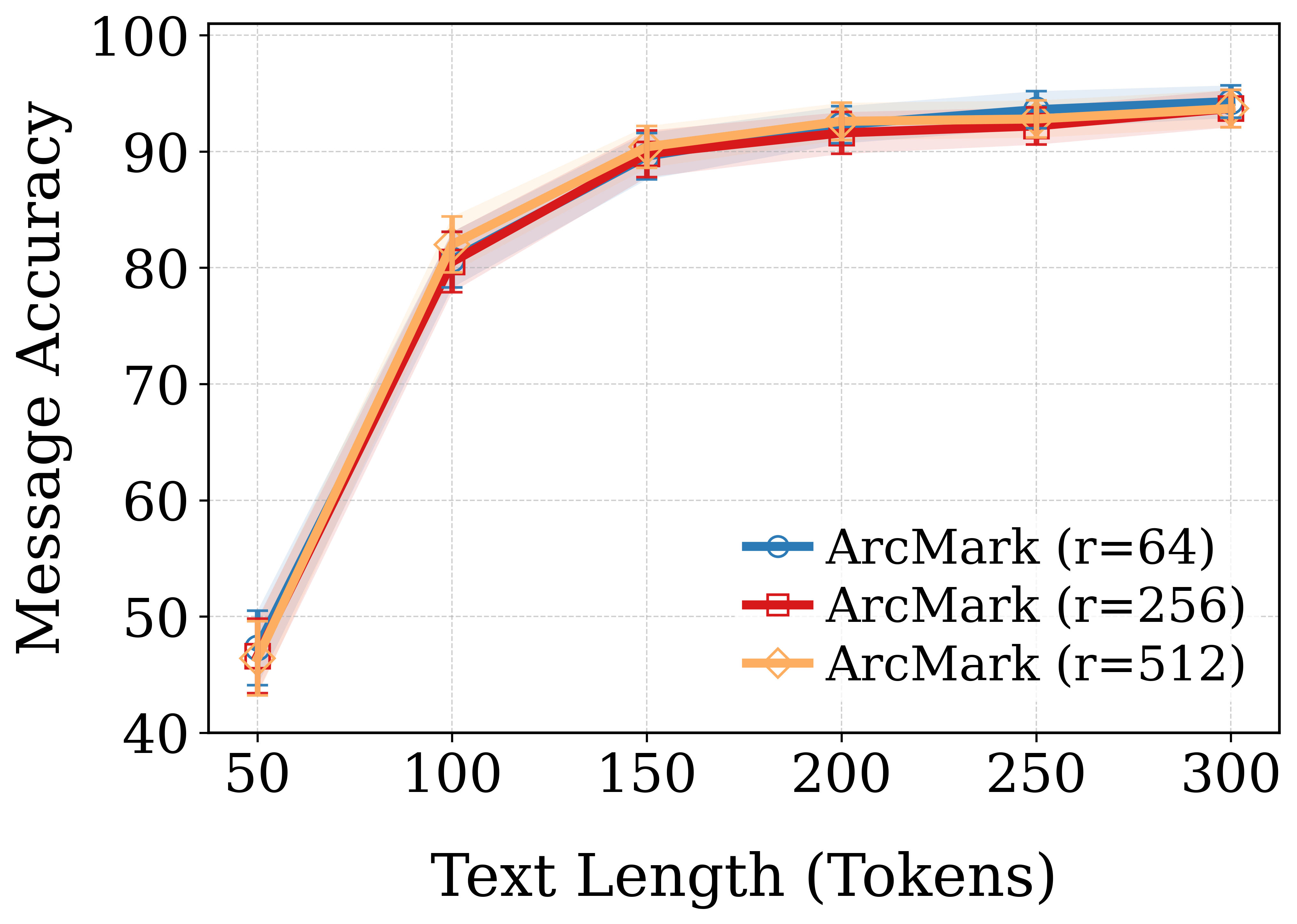}
    \end{subfigure}
    \hfill
    \begin{subfigure}[t]{0.45\linewidth}
        \centering
        \includegraphics[width=\linewidth]{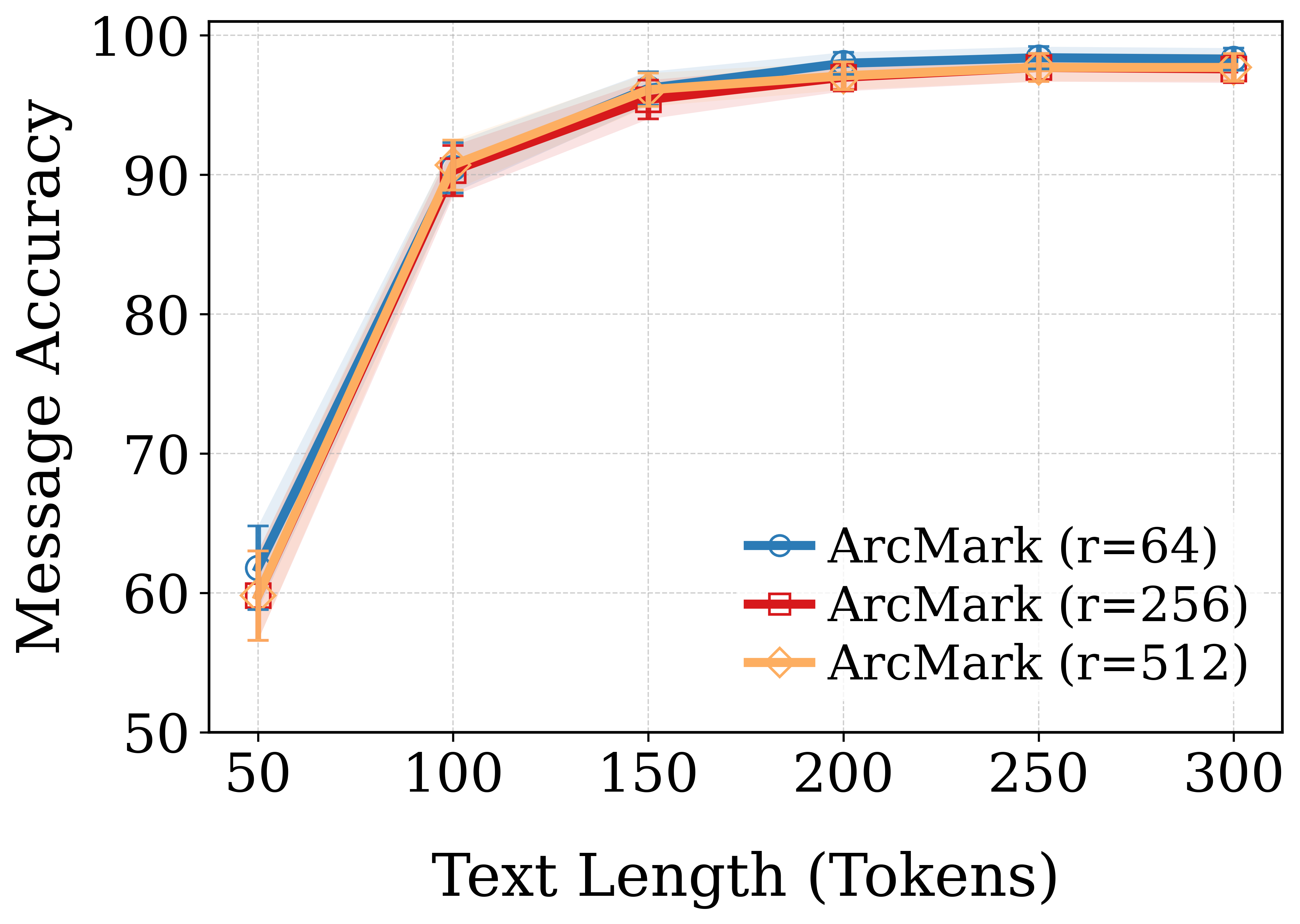}
    \end{subfigure}
    \caption{Message accuracy for Qwen3-8B (left) and Llama3-8B (right) when varying the number of discrete side information values ($r = 64, 256, 512$) for 2-byte watermarks, averaged over 1000 trials. Error bars indicate SEM. Performance is comparable across all values of $r$, indicating that small side information resolutions such as $r=64$ are sufficient in practice and lead to a more efficient optimal transport solver.}
    \label{fig:side_info_resolution_ablation}
\end{figure*}

\end{document}